\documentclass{article}

\usepackage[preprint]{neurips_2026}

\usepackage[utf8]{inputenc} 
\usepackage[T1]{fontenc}    
\usepackage{hyperref}       
\usepackage{url}            
\usepackage{booktabs}       
\usepackage{amsfonts}       
\usepackage{nicefrac}       
\usepackage{microtype}      
\usepackage[table]{xcolor}  
\usepackage{wrapfig}
\usepackage{graphicx}
\usepackage{amsmath}
\usepackage{amsthm}
\usepackage{cleveref}
\usepackage{thmtools, thm-restate}
\usepackage{natbib}
\usepackage{multirow}

\usepackage{threeparttable}
\usepackage{subcaption}
\usepackage{float}
\usepackage{algorithm}
\usepackage{algpseudocode}
\usepackage{colortbl}
\usepackage{xspace}
\usepackage{enumitem}
\newcommand{\method}{FlashMol\xspace}

\newtheorem{proposition}{Proposition}
\newtheorem{definition}{Definition}

\crefname{appendix}{appendix}{appendices}
\Crefname{appendix}{Appendix}{Appendices}

\title{FlashMol: High-Quality Molecule Generation in as Few as Four Steps}

\author{%
  Xinyuan Wei$^{1,2*}$ \quad
  Zian Li$^{1,3*}$ \quad
  Shaoheng Yan$^{1,2}$ \quad
  Cai Zhou$^{4}$ \quad
  Muhan Zhang$^{1,5\dagger}$ \\[4pt]
  $^1$Institute for Artificial Intelligence, Peking University \\
  $^2$Yuanpei College, Peking University \\
  $^3$School of Intelligence Science and Technology, Peking University \\
  $^4$Department of
Electrical Engineering and Computer Science, Massachusetts Institute of Technology \\
  $^5$State Key Laboratory of General Artificial Intelligence, Peking University\\[2pt]
}

\graphicspath{{figures/}}

\begin{document}

\maketitle
\renewcommand{\thefootnote}{\fnsymbol{footnote}}
\footnotetext{$^*$ Equal Contribution.}
\footnotetext{$\dagger$ Correspondence to Muhan Zhang <muhan@pku.edu.cn>.}
\renewcommand{\thefootnote}{\arabic{footnote}}


\begin{abstract}

Generating chemically valid 3D molecular conformations is critical for computational drug discovery. Classical diffusion-based models like GeoLDM perform well but require hundreds of steps, making large-scale in silico screening impractical. Recent efforts on few-step molecular generation have accelerated this process to 12-50 steps, but they often largely sacrifice sample stability. In this work, we present FlashMol, an ultra-fast molecule generative model producing high-quality molecular conformations in as few as 4 steps. To achieve this, we adapt distribution matching distillation (DMD) — a reverse KL-divergence minimization objective — to the molecular domain for effective distillation. Considering the local minimization behavior of DMD, we respace the molecule generation timesteps, providing the generator with much better initialization and enables effective distillation. Additionally, to mitigate the mode-seeking behavior of DMD and improve diversity, we further regularize it with a Jensen-Shannon divergence term, which incorporates the mean-seeking behavior of the forward KL divergence. Extensive experiments on QM9 and GEOM-DRUG datasets demonstrate that FlashMol matches and even surpasses the original 1000-step teacher, achieving up to 250× acceleration in sampling speed while maintaining high molecular quality.

\end{abstract}


\section{Introduction}

\label{section:intro}

Molecule generative models have emerged as a cornerstone of computational molecular design, offering
the prospect of exploring chemical space at scales that are infeasible for traditional physics-based
simulation~\citep{hoogeboom2022equivariantdiffusionmoleculegeneration, xu2023geometriclatentdiffusionmodels, li2025geometricrepresentationconditionimproves,irwin2025semlaflowefficient3d}. Among these approaches, diffusion-based molecule generative models such as EDM~\citep{hoogeboom2022equivariantdiffusionmoleculegeneration} and GeoLDM~\citep{xu2023geometriclatentdiffusionmodels} produce stable and valid molecules through iterative denoising, demonstrating strong potential in domains such as drug discovery~\citep{Warr2022ExplorationOU,
Kong2021MoleculeGB, Brylinski2014ComputationalRO}, protein engineering~\citep{Yildirim2024NextGenTP},
and materials science~\citep{Greenaway2021IntegratingCA}.

However, generating a molecule with diffusion-based methods typically requires hundreds to thousands
of sequential network evaluations.
For example, classic baselines including EDM~\citep{hoogeboom2022equivariantdiffusionmoleculegeneration} and GeoLDM~\citep{xu2023geometriclatentdiffusionmodels} require 1000 NFEs, and even more efficient flow-matching variants such as SemlaFlow~\citep{irwin2025semlaflowefficient3d} and
ET-Flow~\citep{hassan2024etflowequivariantflowmatchingmolecular} still require more than a hundred steps.
Although recent few-step generative methods~\citep{zhang2025accelerating, ni2025straightlinediffusionmodelefficient, li2025geometricrepresentationconditionimproves}
have reduced NFE to the range of 12--50 steps, they often come with substantially lower molecule stability, and no existing method, to the best of our knowledge, achieves reliable generation in 10 steps or fewer, a regime that is increasingly common in image and video domains~\citep{huang2025self, geng2025meanflowsonestepgenerative, song2023consistencymodels}. Sampling speed is therefore a major
bottleneck preventing these models from being deployed in the high-throughput screening pipelines essential for drug discovery.

\begin{figure}[h]
    \centering
    \includegraphics[width=0.8\textwidth]{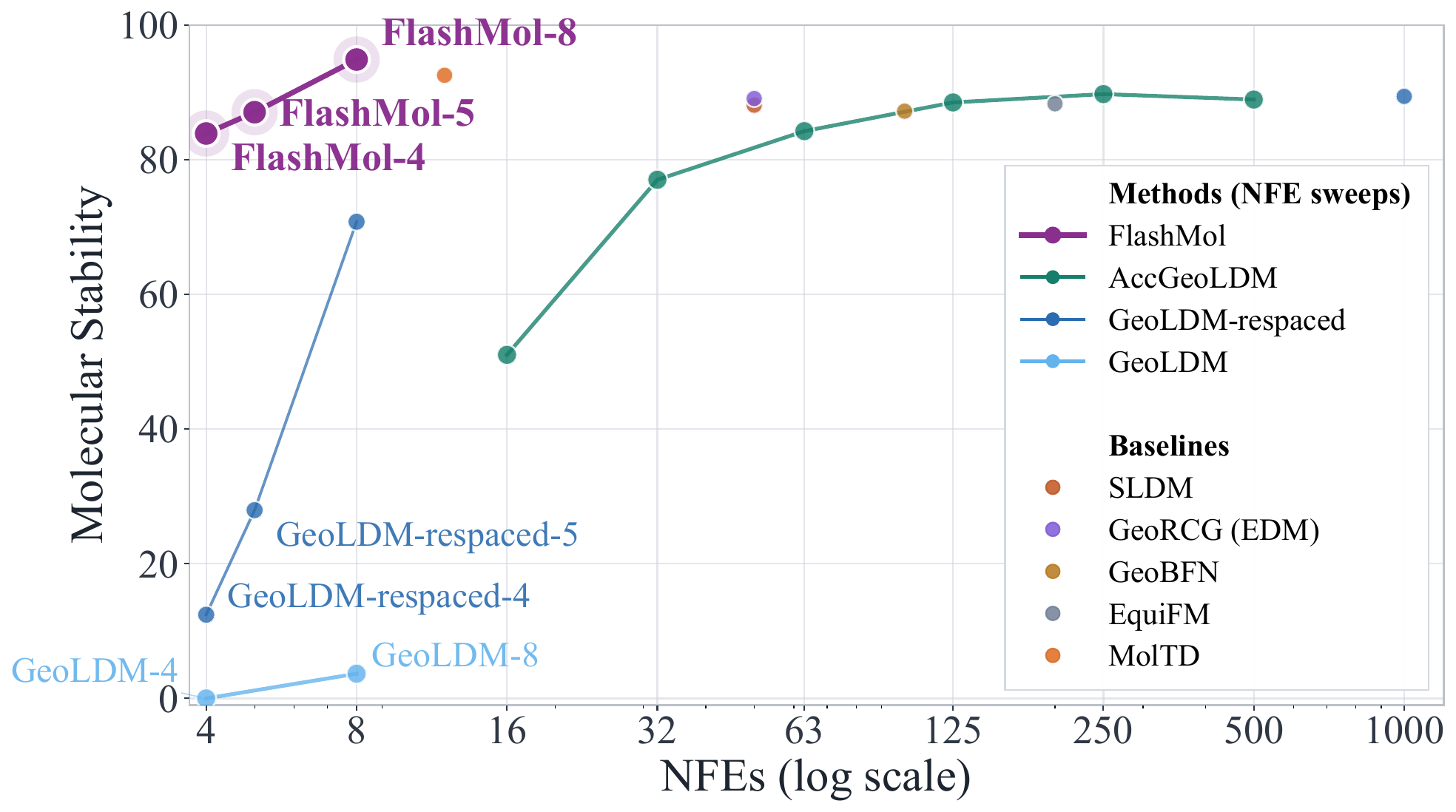}
    \caption{\textbf{Molecule generation quality on QM9.} We achieve comparable or superior performance on QM9 with only 4 to 8 steps, outperforming recent few-step molecule generative models. GeoLDM-respaced uses our respaced timesteps (\Cref{subsec:reschedule}); GeoLDM uses the original DDPM timesteps.}
    \label{fig:front}
\end{figure}

In this work, we propose \method, an ultra-fast molecule generative model that generates high-quality molecules in \textbf{as few as 4 steps}, as shown in~\Cref{fig:front}. To achieve this, we distill a few-step student from its 1000-step teacher via distribution matching distillation~\citep{yin2024onestepdiffusiondistributionmatching, yin2024improveddistributionmatchingdistillation}, a reverse-KL method that encourages high-fidelity samples and has shown strong promise in image and video domains~\citep{yin2024onestepdiffusiondistributionmatching, yin2024improveddistributionmatchingdistillation, jiang2026distributionmatchingdistillationmeets, huang2025self}. We identify two key obstacles to applying DMD in molecular generation. First, the local optimization behavior of DMD requires a well-initialized few-step student to provide useful optimization signals. Yet molecular structures are far less tolerant of generation errors than images: small deviations in bond lengths and angles can destroy chemical validity, making the initial few-step student much less likely to produce reasonable samples (nearly 0 $\%$ stability). To address this issue, we draw inspiration from the EDM design space analysis~\citep{karras2022elucidatingdesignspacediffusionbased}
and \textbf{redesign the discrete timestep schedule at inference and distillation training time}, without any retraining. This respacing substantially improves few-step quality and provides a viable student initialization for DMD. Second, the mode-seeking behavior of DMD gives the student a smaller probability support than the teacher, reducing molecular diversity. This issue is especially pronounced on QM9~\citep{ramakrishnan2014quantum}, which nearly enumerates the space of small molecules~\citep{xu2023geometriclatentdiffusionmodels}. To address this issue, we leverage a Jensen-Shannon divergence~\citep{xu2025onestepdiffusionmodelsfdivergence}, which \textbf{combines forward and reverse KL divergence, to regularize support shrinkage}, and treat it as a regularization term with a small coefficient, successfully improving the generation diversity.
To summarize, our contributions are as follows:
\begin{itemize}[noitemsep, topsep=0pt, left=0.3cm]
    \item We integrate distribution matching distillation into molecular generation using the latent molecular generative model GeoLDM~\citep{xu2023geometriclatentdiffusionmodels}.
    \item We identify inference-time timestep scheduling as a key bottleneck for effective distribution distillation and introduce a respacing strategy inspired
    by EDM~\citep{karras2022elucidatingdesignspacediffusionbased} that substantially improves the initial few-step generator even without training, and enables effective distillation.
    \item We extend the vanilla DMD objective with a Jensen-Shannon divergence term and provide a unified integration of KL-divergence-related objectives in molecular settings.
    \item We achieve \textbf{83.89\% molecular stability} on QM9 and \textbf{84.72\% atom stability} on GEOM-DRUG with only \textbf{4 steps}.
\end{itemize}

\section{Related Work}

\paragraph{Diffusion and Flow Models for 3D Molecule Generation}

Advances in molecular generation can be broadly categorized based on the output modality: 2D graph generation and 3D coordinate generation. Early work on molecular generation primarily focused on 2D graph generation~\citep{vignac2022digress,qin2024defog,latentgraphdiffusion}, where molecules are represented as graphs of atoms and bonds~\citep{jang2023hierarchical}. However, recent research has shifted towards generating 3D coordinates of atoms, allowing models to capture the spatial arrangement of molecules, which is crucial for their chemical properties. Prominent models such as the Equivariant Diffusion Model (EDM)~\citep{hoogeboom2022equivariantdiffusionmoleculegeneration} and GeoLDM~\citep{xu2023geometriclatentdiffusionmodels} have demonstrated impressive performance in both conditional and unconditional molecule generation tasks by adopting diffusion models. Despite their successes, these models often require a thousand function evaluations (NFEs) to generate a single sample. To address this limitation, flow-based models~\citep{irwin2025semlaflowefficient3d, song2023equivariantflowmatchinghybrid, dunn2026flowmol3} reduce the number of NFEs required, yet they still typically require over a hundred evaluations. Other methods, such as GeoRCG~\citep{li2025geometricrepresentationconditionimproves} and REED~\citep{wang2025learning}, use molecular representations to enhance generation and can generate stable molecules in as few as 50 steps. A recent method~\citep{zhang2025accelerating} introduces molecule-specific trajectory diagnosis, which accelerates sampling and achieves strong performance with as few as 12 steps.

\paragraph{Distillation and Distribution Matching for Accelerated Sampling}

Efforts to accelerate diffusion models generally focus on two strategies. The first involves progressive distillation~\citep{salimans2022progressivedistillationfastsampling}, consistency models~\citep{song2023consistencymodels,song2023improved}, shortcut models~\citep{lu2024simplifying, geng2025meanflowsonestepgenerative, lin2025design,zhou2025terminal}, and flow map distillation~\citep{boffi2025build,tong2025flow}, where a student model is trained to replicate a teacher model's denoising trajectory. While this approach reduces the required steps, it suffers from error accumulation as the trajectory length increases. The second strategy, distribution matching distillation~\citep{yin2024onestepdiffusiondistributionmatching, yin2024improveddistributionmatchingdistillation, jiang2026distributionmatchingdistillationmeets, xu2025onestepdiffusionmodelsfdivergence,tong2025flow}, uses a reverse KL-divergence objective to match the output distributions of the teacher and the student models, effectively avoiding error accumulation. This line of distribution matching work has led to state-of-the-art performance in one-step generation.

\section{Preliminaries}
\label{section:preliminaries}

\subsection{Molecule Diffusion Models}

In this paper, a molecule with $N$ atoms is represented as a point cloud $\mathcal{G}=\langle x,h\rangle$, where $x\in\mathbb{R}^{N\times 3}$ denotes 3D coordinates and $h\in\mathbb{R}^{N\times d}$ denotes atom features. \method adopts GeoLDM~\citep{xu2023geometriclatentdiffusionmodels} as its backbone, which encodes $\mathcal{G}$ into a continuous latent molecule $z_0=\langle z_x,z_h\rangle$ and decodes the final latent sample back to molecular coordinates and features. The coordinate latent $z_x$ is kept in the zero-center-of-mass subspace to preserve translation invariance.

The latent diffusion process corrupts $z_0$ using a predefined noise schedule $(\alpha_t,\sigma_t)$:
$
z_t = \alpha_t z_0 + \sigma_t \epsilon, \epsilon\sim\mathcal{N}(0,I).
$
The denoising networks are implemented with equivariant architectures~\citep{hoogeboom2022equivariantdiffusionmoleculegeneration}, so rotations of coordinate latents induce corresponding rotations in the predicted coordinates while atom-feature channels remain invariant. Our method keeps this GeoLDM latent-space formulation, but distills a few-step student generator to replace the expensive multi-step teacher. Additional details on the original molecule diffusion formulation are provided in~\Cref{appendix:molecule_diffusion}.

\subsection{Distribution Matching Distillation}

\label{subsec:dmd bkg}

Distribution matching distillation (DMD)~\citep{yin2024improveddistributionmatchingdistillation, yin2024onestepdiffusiondistributionmatching} aims to minimize the KL divergence between the few-step student model and the multi-step teacher model $\text{KL}(p_{\text{fake}} \| p_{\text{real}})$. Let $F(x,t) = \alpha_tx + \sigma_t \epsilon$ denote the corruption process for non-geometric objects $x$, with $\epsilon \sim \mathcal{N}(0, I)$. The corresponding DMD loss has a gradient equivalent to:
\begin{align}
\label{eq:dmd}
    \nabla\mathcal{L}_\text{DMD} &=
        \mathbb{E}_t\left(\nabla_\theta \text{KL}(p_{\text{fake},t} \| p_{\text{real},t}) \right) \\&=
        - \mathbb{E}_{t}\left(\int \big(s_{\text{real}}(F(G_{\theta}(z), t), t) - s_{\text{fake},\phi}(F(G_{\theta}(z), t), t)\big) \frac{dG_\theta(z)}{d\theta} \hspace{.5mm} dz\right).
\end{align}
Here, $G_\theta(z)$ denotes the few-step student generator, starting from noise $z \sim \mathcal{N}(0, I)$, whose generated samples $x$ follow $x \sim p_{\text{fake}}$. The distributions $p_{\text{fake},t}$ and $p_{\text{real},t}$ correspond to the diffusion distributions defined by the forward operator $F$ of the student and teacher models, respectively. The score functions~\citep{song2021scorebasedgenerativemodelingstochastic} are defined as
$
s_{\text{real}}(x, t) = \nabla \log p_{\text{real},t}(x)$ and $s_{\text{fake}, \phi}(x, t) = \nabla \log p_{\text{fake},t}(x)
$, where $s_{\text{fake},\phi}(x, t)$ serves as a proxy multi-step score for the few-step student generator and is updated using the diffusion loss computed on samples produced by $G_\theta$~\citep{yin2024onestepdiffusiondistributionmatching} while $s_{\text{real}}(x, t)$ is approximated by the teacher model and is frozen during the training process. 

The teacher model is parameterized as $\mu_{\text{real}}$ and $s_{\text{fake},\phi}$ as $\mu_{\text{fake}, \phi}$, which are both noise-prediction diffusion models~\citep{yin2024improveddistributionmatchingdistillation}; the denoising output $\epsilon_\phi$ of $\mu_{\text{fake}, \phi}$
  is converted to $s_{\text{fake},\phi}$ via the score-based SDE framework~\citep{song2021scorebasedgenerativemodelingstochastic}.Following DMD2~\citep{yin2024improveddistributionmatchingdistillation}, in each iteration, $\mu_{\text{fake},\phi}$ is first updated for 5 steps using the diffusion loss on the generated sample of the student model: 
\begin{align}
\label{eq:fake_update}
    \mathcal{L}_{\text{fake}} = \mathbb{E}_{x \sim p_{\text{fake}}, \epsilon\sim \mathcal{N}(0,I)} \big[ w(t)||\epsilon_{\phi}(F(x,t),t) - \epsilon||^2
    \big ]
\end{align} where 
$x$ is the sample generated by the student model $G_\theta$ and 
$\epsilon_{\phi}(F(x,t),t)$ is the output of $\mu_{\text{fake}}$. The student generator $G_{\theta}$ is updated by the gradient of the DMD loss $\mathcal{L}_{\text{DMD}}$ for one step afterwards.

As in DMD2~\citep{yin2024improveddistributionmatchingdistillation}, few-step generator samples in a consistency-sampling manner~\citep{song2023consistencymodels}. Given a generator $G$, the final samples are obtained by repeatedly applying:
\begin{align}
\label{eq:cons_sample}
   x^{t+1} = G(z^t), \quad 
    z^{t+1} = \alpha_{t+1} x^{t+1} + \sigma_{t+1}\epsilon
\end{align}
where $\epsilon,z^0 \sim \mathcal{N}(0, I)$ and $\alpha_t, \sigma_t$ are fixed noise schedules. This iterative refinement significantly improves the sample quality of the alternative one-step generator~\citep{yin2024improveddistributionmatchingdistillation}.

\subsection{$f$-Divergences for Distribution Matching Distillation}
\label{subsection:f-div bkg}

DMD optimizes the reverse KL loss between student and teacher distributions. More generally, one can consider $f$-divergences of the form $D_f(p ||q) = \int q(x)f(\frac{p(x)}{q(x)})\space dx$. As derived in Xu et al.~\citep{xu2025onestepdiffusionmodelsfdivergence},
\begin{align}
\label{eq:f-div}
 \nabla_\theta D_f(p_{\text{real}}||p_{\text{fake}}) = \mathbb{E}_{\epsilon, t} \big [ - \int f''(r)r^2 \big(s_{\text{real}}(F(G_{\theta}(z), t), t) - s_{\text{fake}, \phi}(F(G_{\theta}(z), t), t)\big) \frac{dG_\theta(z)}{d\theta} \hspace{.5mm} dz \big ] 
\end{align}
where $r = \frac{p_{\text{real}}(x)}{p_{\text{fake}}(x)}$. For the Jensen-Shannon divergence, $f(r) := r \log r  - (r+1)\log \frac{r+1}{2}$. To approximate $r$, we can introduce a discriminator $D$ trained with the GAN objective, which yields the approximation $r \approx D(x) / (1 - D(x))$.

\section{Methods}

\label{sec:methods}

\subsection{Geometric Distribution Matching Distillation}
\begin{figure}[t]
    \centering
    \includegraphics[width=0.9\textwidth]{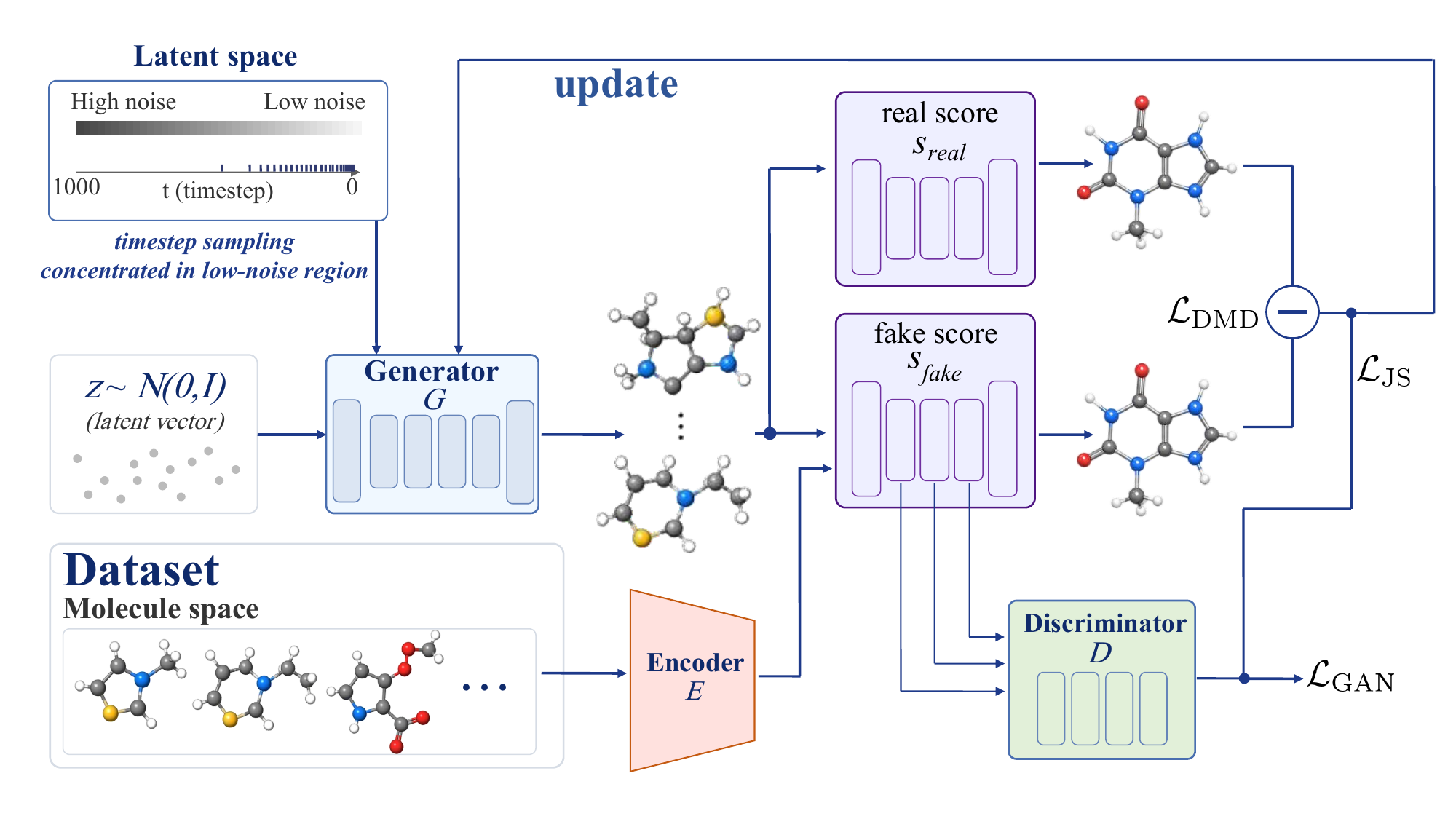}
    \caption{\textbf{Training framework for \method.} In each iteration, the few-step generator $G$ produces a batch of sampled molecules. Then, the $\mu_{\text{fake}}$ model which approximates fake score $s_{\text{fake}}$ and the discriminator $D$ are first updated using diffusion loss and GAN loss respectively for 5 steps. After that, $\mu_{\text{fake}}$ and the discriminator $D$'s output are used to compute the DMD and Jensen-Shannon divergence gradient. Lastly, the gradients are mixed and update the generator $G$.}
    \label{fig:framework}
\end{figure}

We now describe the key designs of the {\method} framework, which adapts the DMD approach to the molecular domain with carefully tailored mechanisms, achieving remarkable few-step generation performance. We choose GeoLDM~\citep{xu2023geometriclatentdiffusionmodels}, a strong latent molecule generative model, as the backbone of \method. The overall training framework is illustrated in~\Cref{fig:framework}. 
Following DMD~\citep{yin2024onestepdiffusiondistributionmatching}, our training uses three models: the few-step generator \(G_\theta\),
  the frozen multi-step teacher score estimator \(\mu_{\text{real}}\), and the proxy score estimator \(\mu_{\text{fake}}\) for the student
  generator. In each training iteration, the student generator $G_\theta$ first produces latent molecules $\mathcal{G}=\langle x,h\rangle$ from Gaussian noise $z$ via $K$-step consistency sampling, as defined in~\Cref{eq:cons_sample}, without decoding to the molecule space. During sampling, the coordinate component $x$ is always projected onto the zero-mean subspace to preserve translation invariance. To let $\mu_{\text{fake}}$ faithfully approximate the score of the student generator, we first update $\mu_{\text{fake}}$ according to the diffusion loss in~\Cref{eq:fake_update} for 5 steps following~\citep{yin2024improveddistributionmatchingdistillation}. Then, the generated latent molecules are passed through $\mu_{\text{real}}$ and $\mu_{\text{fake}}$ to estimate the scores $\nabla \log p_t(\langle x_t,h_t\rangle)$, which yield the DMD gradients following~\Cref{eq:dmd} for updating the student generator $G_\theta$. 
  
  Notably, in the geometric formulation of DMD, equivariance is always preserved: the resulting DMD gradient is invariant to rotation of the student samples. This property improves data efficiency in few-step distillation for equivariant molecular generators. For simplicity, we assume $G$ to be a one-step generator. Formally,
\begin{proposition}[invariance]
\label{thm:invariance}
    The distribution matching loss
$$\nabla\mathcal{L}_\text{DMD}(z;\theta) =
        - \mathbb{E}_{t}\left(\int \big(s_{\text{real}}(F(G_{\theta}(z), t), t) - s_{\text{fake}, \phi}(F(G_{\theta}(z), t), t)\big) \frac{dG_\theta(z)}{d\theta} \hspace{.5mm} dz\right)
$$ is \emph{invariant} to SE(3) group, where Gaussian noise $z=\langle z_x,z_h\rangle \in \mathbb{R}^{N \times (3+d)}$ and the group transformation applies only on $z_x$. Formal definitions and proof are in~\Cref{appendix:A}.
\end{proposition}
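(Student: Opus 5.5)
We will show that replacing $z$ by $g\cdot z$, where $g=(R,b)\in SE(3)$ acts on $z_x$ as $z_x\mapsto z_xR^\top + \mathbf{1}b^\top$ and leaves $z_h$ untouched, leaves the per-sample gradient $\nabla\mathcal{L}_\text{DMD}(z;\theta)$ unchanged. Translations are handled first and essentially for free. Both $z_x$ and every coordinate latent produced during sampling are projected onto the zero-center-of-mass subspace. Hence $g\cdot z$ and the rotated sample $(R,0)\cdot z$ coincide after projection, and it suffices to treat $g=R\in SO(3)$ (or $O(3)$) acting linearly. Write $\rho(R)$ for the induced linear map on $\mathbb{R}^{N\times(3+d)}$: it equals $R$ on coordinate channels and the identity on feature channels. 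This map is orthogonal, so $\rho(R)^\top\rho(R)=I$.

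The argument proceeds in three steps.

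\textbf{Step 1 (generator).} The generator is an equivariant network, as in GeoLDM/EGNN, so $G_\theta(\rho(R)z)=\rho(R)G_\theta(z)$ for all $\theta$. Differentiating this identity in $\theta$ gives $\frac{dG_\theta(\rho(R)z)}{d\theta}=\rho(R)\frac{dG_\theta(z)}{d\theta}$, since $\rho(R)$ does not depend on $\theta$.

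\textbf{Step 2 (noising).} The forward noise $\epsilon$ is isotropic Gaussian, restricted to the zero-CoM subspace on coordinates, so its law is invariant under $\rho(R)$. We may therefore reparametrize $\epsilon=\rho(R)\epsilon'$ inside the expectation over $\epsilon$ and $t$. This gives $F(\rho(R)x,t)=\alpha_t\rho(R)x+\sigma_t\rho(R)\epsilon'=\rho(R)F(x,t)$ in distribution.

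\textbf{Step 3 (scores).} The teacher density $p_{\text{real},t}$ is rotation invariant, because the teacher is an equivariant denoiser with an invariant prior. Hence its score is equivariant: $s_{\text{real}}(\rho(R)y,t)=\rho(R)s_{\text{real}}(y,t)$, which follows from the chain rule applied to $\log p(\rho(R)y)=\log p(y)$. Likewise $\mu_{\text{fake},\phi}$ is an equivariant noise-prediction network, so $s_{\text{fake},\phi}$ is equivariant. This also holds at the level of densities, since $p_{\text{fake}}$ is the pushforward of an invariant prior through an equivariant map.

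Combining the three steps, the integrand at $\rho(R)z$ becomes
\[
\big(\rho(R)\,\Delta s\big)^\top\rho(R)\,\frac{dG_\theta(z)}{d\theta}=\Delta s^\top\frac{dG_\theta(z)}{d\theta},
\]
where $\Delta s$ is the score difference evaluated at $F(G_\theta(z),t)$. The equality uses the orthogonality of $\rho(R)$. Taking expectations over $t$ and $\epsilon$ yields invariance. Invariance of the full integrated objective then follows from the invariance of the Gaussian prior on $z$ under a change of variables with unit Jacobian.

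The main obstacle is being precise about what ``invariant'' means for the gradient. The product inside the gradient is a contraction of a vector in sample space with a Jacobian, written $\Delta s^\top J$. Invariance therefore hinges on $\rho(R)$ being orthogonal and on the scores being exactly equivariant rather than only approximately so. For $s_{\text{fake},\phi}$ I would justify exact equivariance architecturally, through the EGNN parametrization, rather than through the density, since it is only a learned proxy. I also need to check that the zero-CoM projection commutes with $\rho(R)$; it does, because $R$ acts on the right of $z_x$ while the projection subtracts the row mean on the left. For multi-step consistency sampling the same argument applies by induction over the steps. Re-noising with fresh isotropic noise preserves equivariance in distribution at each step.
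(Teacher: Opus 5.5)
Your proposal is correct and follows essentially the same route as the paper's proof: equivariance of $G_\theta$ and hence of its $\theta$-Jacobian, equivariance of the two scores, cancellation through orthogonality of the block rotation $\operatorname{diag}\{\mathbf{R}, I_d\}$, and a unit-Jacobian change of variables in the isotropic noise $\epsilon$. Compared with the paper, you make explicit the score-equivariance step that it uses only implicitly, you treat translations via the zero-CoM projection (the paper leaves this to a footnote), and you sketch the multi-step case, which the paper avoids by assuming a one-step generator.
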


At inference, the few-step generator samples in consistency sampling manner with $K$-steps, and the sampled latent molecules are decoded using the VAE inherited from the pretrained GeoLDM model.

\subsection{Escaping Poor Initial Modes via Respaced Timesteps}
\label{subsec:reschedule}

Directly adapting DMD to molecules poses significant optimization challenges. In this subsection, we identify the key issue and show how the proposed timestep respacing strategy addresses it.

\paragraph{Local Optimization Behavior of DMD} Theoretically, the DMD loss is a complete loss in the sense that its optimal solution aligns the student and teacher distributions nearly perfectly, assuming unlimited model capacity. In practice, however, the reverse KL divergence,
$
\text{KL}(p_{\text{fake}} \| p_{\text{real}}) = \mathbb{E}_{x \sim p_{\text{fake}}} \big[ \log p_{\text{fake}}(x) - \log p_{\text{real}}(x) \big],
$
induces a local optimization behavior~\citep{jiang2026distributionmatchingdistillationmeets, bai2026optimizing, yin2025slow}, because \textbf{the samples are drawn from the student model}, rather than the teacher or data distribution as in the case of the forward KL.
Consequently, if the initial student model cannot generate reasonable (even if imperfect) samples, the DMD loss in~\Cref{eq:dmd} provides limited optimization signals. 

\begin{figure}[t]
\centering
\begin{minipage}[c]{0.55\textwidth}
\centering
\includegraphics[width=0.25\linewidth]{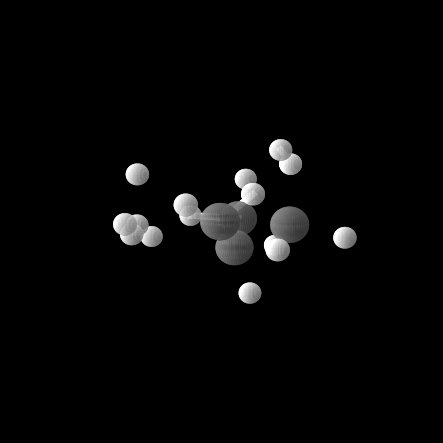}\hfill
\includegraphics[width=0.25\linewidth]{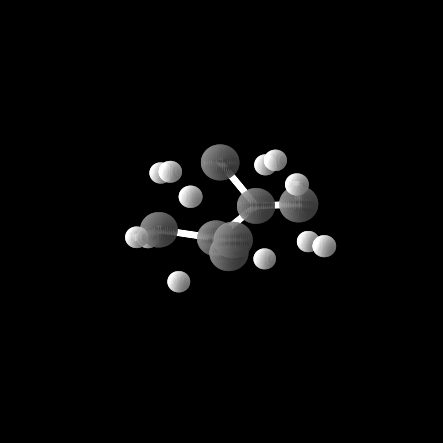}\hfill
\includegraphics[width=0.25\linewidth]{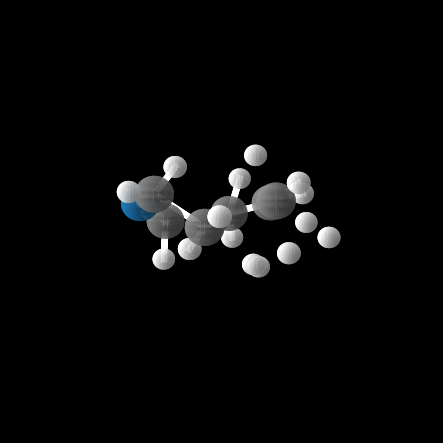}\hfill
\includegraphics[width=0.25\linewidth]{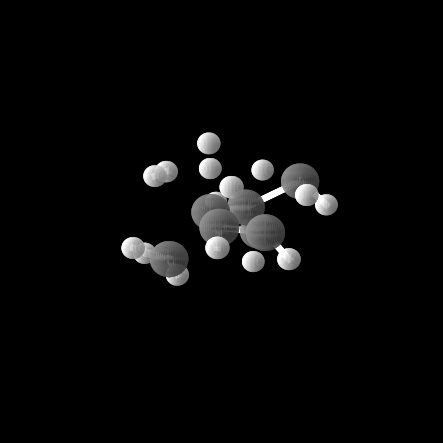}\\
\includegraphics[width=0.25\linewidth]{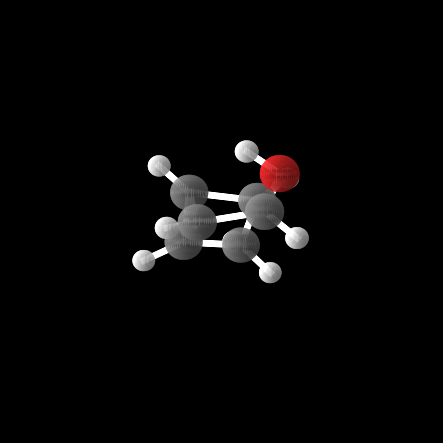}\hfill
\includegraphics[width=0.25\linewidth]{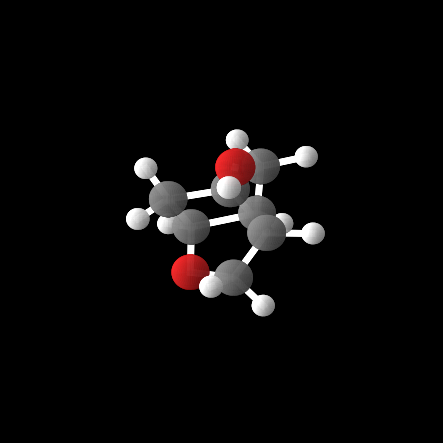}\hfill
\includegraphics[width=0.25\linewidth]{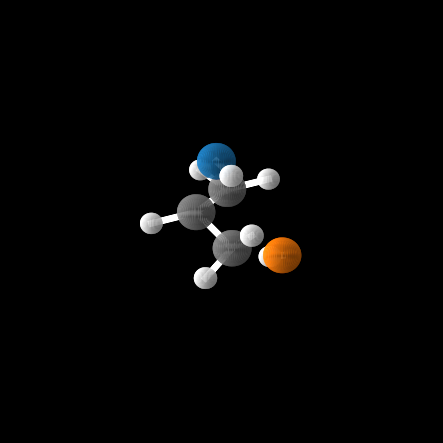}\hfill
\includegraphics[width=0.25\linewidth]{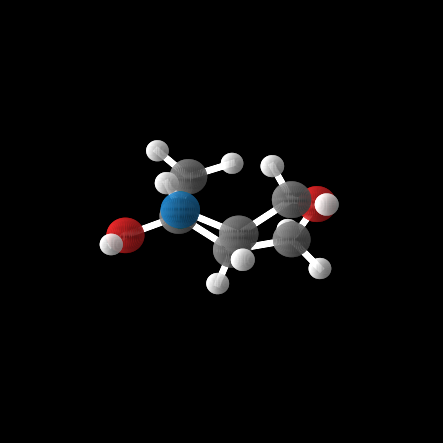}
\caption{Random 4-step samples from the multi-step teacher without additional training. Top: original GeoLDM schedule. Bottom: our respaced schedule.}
\label{fig:schedule_samples}
\end{minipage}
\hfill
\begin{minipage}[c]{0.40\textwidth}
\centering
\includegraphics[width=\linewidth]{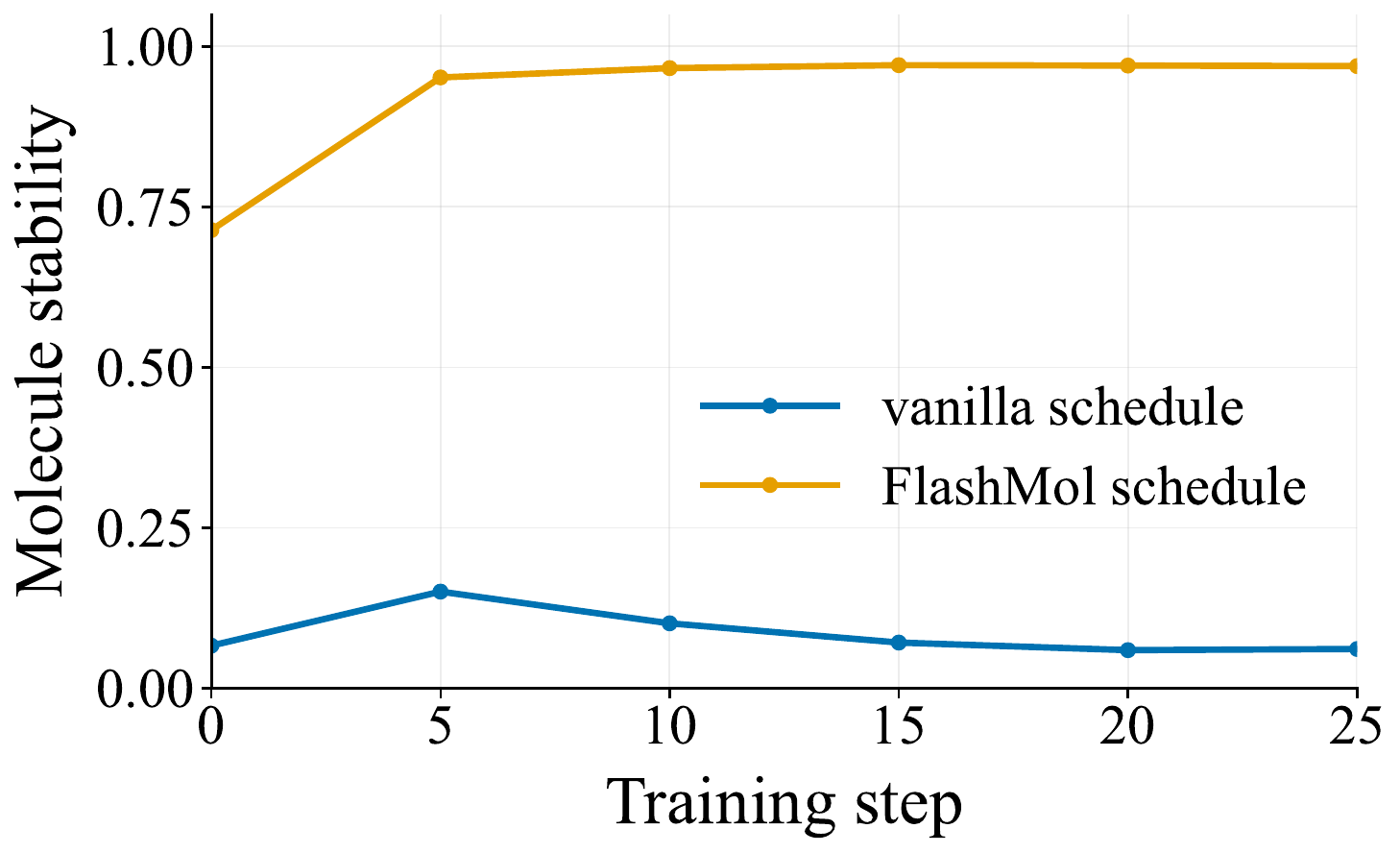}
\caption{Distribution matching distillation training dynamics under different sampling noise schedules.}
\label{fig:vanilla_training}
\end{minipage}
\end{figure}

Different from other modalities such as images, molecule generation exhibits two distinctive characteristics: (1) its process is highly sensitive to noise~\citep{ni2025straightlinediffusionmodelefficient}; and (2) molecular bond lengths and angles must be encoded with precise fine-grained detail, meaning that low-noise regions require careful handling. Directly initializing a few-step student model from a multi-step teacher model and sampling with uniformly spaced timesteps~\citep{song2020denoising} can introduce severe discretization errors and largely skip low-noise regions. As a result, the few-step initial student produces poor-quality molecular samples.
As shown in the top row of~\Cref{fig:schedule_samples}, the 4-step student samples generated with the original timestep schedule are highly noisy and chemically invalid---they often contain disconnected atoms due to large deviations in atom-pair distances. Eventually, directly optimizing such a model with the DMD loss will lead to ineffective few-step distillation. In our initial experiments, we observe that this causes gradient explosions even before the first DMD training epoch is completed and results in poor-quality samples after training. As shown in~\Cref{fig:vanilla_training} (blue line), an 8-step student model sampled with uniform timesteps fails to achieve effective DMD optimization.

\paragraph{Respaced Timesteps for GeoLDM} To address this challenge, we devise a respaced timestep inspired by EDM~\citep{karras2022elucidatingdesignspacediffusionbased}, 
which shows that the optimal schedule, in the sense of minimized truncation error, for the noise levels $\{\sigma_i\}_{i=0}^{N-1}$ takes the form:
\begin{equation}
\label{eq:EDM}
\sigma_i = \Bigl(\sigma_{\max}^{1/\rho} + \frac{i}{N-1}
            \bigl(\sigma_{\min}^{1/\rho} - \sigma_{\max}^{1/\rho}\bigr)\Bigr)^{\!\rho},
\quad i = 0, \ldots, N-1.
\end{equation}
Crucially, this schedule
front-loads large denoising steps at high noise levels and reserves many small steps for
the low-noise regime, where fine geometric detail is recovered. 

However, GeoLDM inherits the variance preserving~\citep{song2021scorebasedgenerativemodelingstochastic} schedule
$\sigma_t = \sqrt{1 - \alpha_t^2}$ with $\alpha_t = 1-(t/T)^2$ and $T=1000$, which allocates timesteps roughly uniformly in noise-variance space. This conflicts with the EDM-style timestep.
We therefore select the timesteps according to the noise-level schedule in~\Cref{eq:EDM}. Specifically, we first choose noise levels uniformly from~\Cref{eq:EDM}, and then select the corresponding timesteps from the GeoLDM schedule to align with these noise levels.
Under this design, the discrete step $t \in [0, T]$ corresponding to the continuous-time noise level $\sigma_i$
satisfies $\sigma_i^2 = 1 - \alpha_t^2 = 1 - (1-(t/T)^2)^2$.
Inverting this relationship gives the formula:
\begin{equation}
\label{eq:reschedule}
\frac{t}{T} = \sqrt{1 - \sqrt{1 - (i/(N-1))^{2\rho}}}\,,
\end{equation}
and we apply this respaced schedule in both the few-step generator $G$ and the score estimator $\mu_{\text{real}}$ and $\mu_{\text{fake}}$, ensuring compatibility during training. 

Furthermore, we investigate the selection of $\rho$ in the schedule. We observe that the original configuration with $\rho=7$ in~\Cref{eq:reschedule} leads to the diffusion step $t \approx 0$ for more than 38\% of the 1000 denoising steps, which is a degenerate case and demonstrates poor sampling quality in molecule generation. This is visualized in~\Cref{fig:rho}. Thus, we experiment and obtain the optimal $\rho = 2.25$, which distinguishes between the $t$ values in $[0, 300]$ while remaining concentrated on low-noise regions. This is justified by our ablation experiments in~\Cref{subsec:rho}.

\paragraph{Benefits of Respacing} After timestep respacing, the generated molecules are dramatically more stable (bottom row of~\Cref{fig:schedule_samples}).        
Although not perfect before distillation, they provide a good starting point for escaping the poor initial modes in DMD training.
Consequently, DMD training is stabilized, with no significant gradient explosion even in later stages as shown in~\Cref{fig:vanilla_training} (orange line).

\subsection{Improving Diversity via $f$-Divergence Regularization}

DMD adopts a reverse KL-divergence objective which severely \textbf{penalizes the student generator for placing probability mass outside the teacher's support}, thereby damaging sampling diversity. 
By choosing the Jensen-Shannon divergence , 
the resulting objective $D_{\text{JS}}(p_{\text{real}}||p_{\text{fake}})$ can be interpreted as the mean of 
$\text{KL}\!\left(p_{\text{fake}}\,\|\,({p_{\text{fake}}+p_{\text{real}}}) /{2}\right)$ and 
$\text{KL}\!\left(p_{\text{real}}\,\|\,({p_{\text{fake}}+p_{\text{real}}}) /{2}\right)$, 
which incorporates both forward- and reverse-KL components~\citep{Lin1991DivergenceMB}. This adds mean-seeking behavior via the forward-KL component, counteracting DMD's reverse-KL mode seeking and improving sample diversity. In \method, we introduce this $f$-divergence as a regularization term complementary to the DMD loss, resulting in the final loss:
\begin{align}
\label{eq:final_loss}
   \mathcal{L}_{\text{\method}} = \mathcal{L}_{\text{DMD}} + \lambda \cdot \mathcal{L}_{\text{JS}}, 
\end{align}
where $\mathcal{L}_{\text{JS}}$ follows the definition in~\Cref{eq:f-div} with $f(r) := r \log r - (r+1)\log \frac{r+1}{2}$.
However, direct differentiation of the loss requires the computation of $\frac{p_{\text{fake}}(\mathcal{G})}{p_{\text{real}}(\mathcal{G})}$ given a molecule $\mathcal{G}$, which is intractable~\citep{xu2025onestepdiffusionmodelsfdivergence}. Therefore, we approximate this term using output of a GAN discriminator~\citep{goodfellow2014generative} according to the formula in~\Cref{subsection:f-div bkg}. 
Following DMD2~\citep{yin2024improveddistributionmatchingdistillation}, we train the discriminator on both the generated molecules of the few-step generator and real molecules in the dataset. To better capture both high-level semantics and low-level details, we extract three intermediate layers of features from the $\mu_{\text{fake}}$ network. Then, for each layer, we perform cross attention~\citep{vaswani2017attention} with a learnable token query for the real/fake logits. We leave the detailed implementation in~\Cref{appendix:gan_design}.

\section{Experiments}

\label{section:experiments}

In this section, we show the remarkable performance of \method on various molecule generative tasks.
We first describe the experimental settings in~\Cref{subsection:setup}. Then, we report quantitative results and acceleration-quality trade-offs across different datasets and metrics in~\Cref{subsection:unconditional} and~\Cref{subsection:conditional}. Finally, we provide an ablation study on the key designs of \method, including time respacing and distillation loss, in~\Cref{subsec:ablation}.
\subsection{Experimental Settings} 
\label{subsection:setup}

\paragraph{Teacher Models}  For unconditional generation tasks, the teacher model is obtained from GeoLDM~\citep{xu2023geometriclatentdiffusionmodels}. For conditional generation, we train the teacher from scratch. Implementation details can be found in~\Cref{appendix:B}.

\paragraph{Tasks and Metrics} We experiment with both unconditional and conditional generation following standard evaluation protocols~\citep{hoogeboom2022equivariantdiffusionmoleculegeneration, zhang2025accelerating}. Specifically, we evaluate \method on QM9~\citep{ramakrishnan2014quantum} dataset, which consists of 130K small molecules (with up to 29 atoms including hydrogens each), for both unconditional and conditional generation, and on GEOM-DRUG~\citep{axelrod2022geomenergyannotatedmolecularconformations} dataset, which consists of 450K larger molecules with an average atom number of 44 and maximum of 181, for unconditional generation. 
For unconditional generation, we report the generated molecules' atom stability, molecule stability, validity and uniqueness following GeoLDM, SLDM, and MOLTD~\citep{xu2023geometriclatentdiffusionmodels, ni2025straightlinediffusionmodelefficient, zhang2025accelerating}. Since our models output atom types and atoms' 3D coordinates directly, 
bond types are computed via look-up tables according to pairwise distances and atom types~\citep{hoogeboom2022equivariantdiffusionmoleculegeneration}.
In addition to those metrics, we record
the number of function evaluations (NFE) to compare sampling efficiency across various models. For conditional generation, we report the mean absolute error (MAE) of the specific property between the generated molecule and the specified condition value.

\subsection{Unconditional Generation on QM9 and GEOM-DRUG}

\label{subsection:unconditional}

\begin{table*}[!t]
\centering
\caption{Unconditional molecule generation results of \method. We present models of three different step counts and our method achieves $250\times$ to $125\times$ acceleration as well as state-of-the-art performance on various datasets. We \colorbox{blue!15}{highlight} metrics in which our student model surpasses the respaced teacher using the same NFEs. \textsc{GeoLDM}-respaced results are obtained using our sampling method; see~\Cref{subsec:reschedule}. The MOLTD V\&U entry is estimated by us.}
\label{tab:main results}
\resizebox{0.85\textwidth}{!}{
\begin{threeparttable}
\begin{tabular}{l | c | c c c c | c c}
    \toprule[1.0pt]
    & & \multicolumn{4}{c|}{\shortstack[c]{\textbf{QM9}}} & \multicolumn{2}{c}{\shortstack[c]{\textbf{DRUG}}} \\
    \# Metrics & NFE & Atom Sta (\%) & Mol Sta (\%) & Valid (\%) & Valid \& Unique (\%) & Atom Sta (\%) & Valid (\%) \\
    \midrule[0.8pt]
    Data & - & 99.0 & 95.2 & 97.7 & 97.7 & 86.5 & 99.9 \\
    \midrule
    GDM-\textsc{aug} & 1000 & 97.6 & 71.6 & 90.4 & 89.5 & 77.7 & 91.8 \\
    EDM        & 1000 & 98.7 & 82.0 & 91.9 & 90.7 & 81.3 & 92.6 \\
    EDM-Bridge & 1000 & 98.8 & 84.6 & 92.0 & 90.7 & 82.4 & 92.8 \\
    GeoLDM    & 1000  & 98.9 & 89.4 & 93.8 & \textbf{92.7} & 84.4 & 99.3 \\
    EquiFM     & 200  & 98.9 & 88.3 & 94.7 & 93.5 & 84.1 & 98.9 \\
    GeoBFN     & 100  & 98.6 & 87.2 & 93.0 & 91.5 & 78.9 & 93.1 \\
    GOAT       & 90   & 99.2 & -     & 92.9 & 92.0 & 84.8 & 96.2 \\
    \midrule[0.3pt]
    SLDM & 50 & 98.70 & 88.09 & 92.84 & 89.53 & - & -
    \\
    GeoRCG (EDM) & 50 & 98.75 & 89.08 & 95.05 & - & 81.44 & 95.70 \\
    AccGeoLDM & 16 & 92.08 & 51.02 & 73.22 & 72.39 & - & - \\
    MOLTD & 12 & 99.40 & 92.53 & 96.04  & 92.0   & 86.88 & 95.33 \\
    \midrule[0.3pt]
        \textsc{GeoLDM}-respaced    &  8 & 95.49 $\pm$ 0.1 &  70.78 $\pm$ 0.4 & 87.13 $\pm$ 0.3 & 81.48 $\pm$ 0.4 & 64.35 & 90.44 \\
    \textbf{\textsc{\method}} & \textbf{8} & \cellcolor{blue!15}\textbf{99.55} $\pm$ 0.0 & \cellcolor{blue!15}\textbf{94.87}  $\pm$ 0.5 & \cellcolor{blue!15}\textbf{96.97} $\pm$ 0.2 & \cellcolor{blue!15}87.51 $\pm$ 0.4 & \cellcolor{blue!15}\textbf{89.98} & \cellcolor{blue!15}\textbf{99.86} \\ 
    \midrule[0.3pt]
    \textsc{GeoLDM}-respaced & 5   &  82.11 $\pm$ 0.1 & 27.96 $\pm$ 0.2 & 62.10 $\pm$ 0.1 & 59.20 $\pm$ 0.1 & 18.27 & 82.07\\
    \textbf{\textsc{\method}} & \textbf{5} & \cellcolor{blue!15}98.73 $\pm$ 0.0 & \cellcolor{blue!15}87.05 $\pm$ 0.4 & \cellcolor{blue!15}92.67 $\pm$ 0.3 & \cellcolor{blue!15}78.83 $\pm$ 0.2 & \cellcolor{blue!15}87.69 & \cellcolor{blue!15}99.69 \\
    \midrule[0.3pt]
    \textsc{GeoLDM}-respaced  & 4  & 72.35 $\pm$ 0.1 & 12.43 $\pm$ 0.3 & 45.24 $\pm$ 0.3 & 42.53 $\pm$ 0.3 & 8.30 & 84.65 \\
    \textbf{\textsc{\method}} & \textbf{4} & \cellcolor{blue!15}97.70 $\pm$ 0.1 & \cellcolor{blue!15}83.89 $\pm$ 0.3 & \cellcolor{blue!15}90.70 $\pm$ 0.2 & \cellcolor{blue!15}72.70 $\pm$ 0.3 & \cellcolor{blue!15}84.72 & \cellcolor{blue!15}99.72 \\
    \bottomrule[1.0pt]
\end{tabular}
\end{threeparttable}
}
\vspace{-10pt}
\end{table*}


\paragraph{Baselines} 
Our direct baseline is GeoLDM~\citep{xu2023geometriclatentdiffusionmodels}, the multi-step teacher of \method. To highlight the effectiveness of the distillation training, we compare directly to the GeoLDM with respaced timesteps (\Cref{subsec:reschedule}), denoted as GeoLDM-respaced. 
We further compare against competitive baselines spanning multiple generative paradigms, including: 1) Diffusion-based models, including EDM and its non-equivariant counterpart (GDM)~\citep{hoogeboom2022equivariantdiffusionmoleculegeneration}, as well as EDM-Bridge~\citep{wu2022diffusionbasedmoleculegenerationinformative} which extends EDM by incorporating informative prior bridges; 2) flow-based models, such as EquiFM~\citep{song2023equivariantflowmatchinghybrid}, GOAT~\citep{hong2025accelerating3dmoleculegeneration} and GeoBFN~\citep{song2024unifiedgenerativemodeling3d}; 3) advanced few-step molecule generative methods, including AccGeoLDM~\citep{lacombe2024acceleratinggenerationmolecularconformations}, SLDM~\citep{ni2025straightlinediffusionmodelefficient}, GeoRCG~\citep{li2025geometricrepresentationconditionimproves} and MOLTD~\citep{zhang2025accelerating}. 
Notably, several methods that directly generate 2D bonds, including SemlaFlow~\citep{irwin2025semlaflowefficient3d}, are not compared against our models for fairness. This is because metrics such as stability and validity are essentially 2D metrics; they can easily reach very high values with 2D generative methods, yet this cannot reflect a method’s ability to learn 3D distributions.

\paragraph{Results and Analysis}
\Cref{tab:main results} presents the main results, and we highlight several important observations as below.

First, with \textbf{the fewest NFEs}, \method achieves \textbf{the best stability and validity} on both QM9 and GEOM-DRUG. Specifically, on QM9, \method achieves 94.87\% molecule stability with only 8 NFEs, substantially outperforming GeoLDM-respaced, which achieves 70.78\%. It also surpasses strong few-step generative methods, including GeoRCG (89.08\% with 50 NFEs) and MOLTD (92.53\% with 12 NFEs). On GEOM-DRUG, \method even exceeds the dataset-level atom stability of 86.5\%, achieving 89.98\% atom stability. 

Second, under extremely smaller NFE budgets, such as 4 and 5 NFEs, \method significantly \textbf{lifts the near-unstable GeoLDM-respaced baseline} to a performance level \textbf{comparable to 1000-NFE baselines}. On QM9, without \method training, GeoLDM-respaced achieves only 12.43\% and 27.96\% molecule stability with 4 and 5 NFEs, indicating that almost no stable molecules can be generated with such few sampling steps. In contrast, 
\method achieves 83.89\% and 87.05\% molecule stability, approaching advanced performance with 250$\times$ and 200$\times$ less computation compared to the original 1000-step GeoLDM baseline.  The gain is even more pronounced on GEOM-DRUG, where the 5-step \method maintains the best performance among all compared methods.

Third, \method shows the \textbf{diversity metric V\&U} to a level \textbf{comparable with advanced methods}. For example, the 8-step \method achieves 87.51\% V\&U, compared with approximately 90\% for the baselines. It is known that reverse-KL-based objectives can lead to substantial diversity loss for the improvement of fidelity~\citep{yin2024onestepdiffusiondistributionmatching, jiang2026distributionmatchingdistillationmeets}, which is also validated in our setting as shown in~\Cref{fig:epoch}. This is because samples from the student distribution that lie outside the support of the teacher distribution incur a large penalty, encouraging $p_{\text{fake}}$ to concentrate on the high-probability modes of the teacher distribution while discarding low-probability regions. However, \method successfully mitigates this effect by introducing an $f$-divergence loss, whose effectiveness is further validated in~\Cref{subsec:ablation}. We note that although the V\&U score of \method remains slightly lower than those of some baselines, we view this \textbf{as an opportunity rather than a limitation} as discussed in DMDR~\citep{jiang2026distributionmatchingdistillationmeets}: steering the optimized distribution toward high-confidence regions is often desirable in practical few-step generation scenarios where sample quality is prioritized over exhaustive distribution coverage.

\begin{table}[t]
\centering
\caption{Conditional molecule generation results are evaluated on various properties of the molecule. We report the MAE(↓) between generated molecules. \textit{QM9} and \textit{Random} results serve as lower and upper bounds of MAE across all properties.}
\label{tab:conditional_results}
\begin{threeparttable}
\centering
\resizebox{0.8\linewidth}{!}{%
    \begin{tabular}{l | c c c c c c c}
    \toprule
    Property & NFE & $\alpha$& $\Delta \varepsilon$ & $\varepsilon_{\mathrm{HOMO}}$ & $\varepsilon_{\mathrm{LUMO}}$ & $\mu$ & $C_v$\\
    Units & - & Bohr$^3$ & meV & meV & meV & D & $\frac{\text{cal}}{\text{mol K}}$  \\
    \midrule
    QM9 & -  & 0.10 & 64 & 39 & 36 & 0.043 & 0.040  \\
    \midrule
    Random & - & 9.01  &  1470 & 645 & 1457  & 1.616  & 6.857   \\
    $N_\text{atoms}$ & - & 3.86  & 866 & 426 & 813 & 1.053  &  1.971 \\
    EDM      & 1000    & 2.76  & 655 & 356 & 584 & 1.111 & 1.101 \\
    GeoLDM & 1000 & 2.37 & 587 & 340 & 522 & 1.108 & 1.025 \\
    GeoBFN & \underline{100} &  2.34 & 577 & \textbf{328} & 516 & 0.998 & 0.949 \\
    GeoLDM-respaced & \textbf{32} & \underline{2.14} & \underline{565} & 342 & \underline{491} & \textbf{0.985} & \underline{0.933} \\
    \textbf{\textsc{\method}} & \textbf{32}  &  \textbf{2.12} & \textbf{556}  & \underline{330} & \textbf{489} & \underline{0.988} & \textbf{0.926}\\
    \bottomrule
\end{tabular}
}
\end{threeparttable}
\vspace{-7pt}
\end{table}

\subsection{Conditional Generation on Molecular Properties}

\label{subsection:conditional}

Conditional generation evaluates the model's ability to generate molecules that satisfy a specific property. Following GeoLDM~\citep{xu2023geometriclatentdiffusionmodels}, for each property condition, we inject it as a node feature in the few-step student, the score estimator $\mu_{\text{fake}}$ and the teacher model. The property values of the generated molecules are predicted by a pretrained property-specific classifier. Since the evaluation of this task relies solely on the 3D coordinates of the generated molecule, higher precision is required. As more NFEs yield finer-grained details, we choose a higher NFE budget (32 NFEs), which is still, to our knowledge, fewer than latest baselines in conditional generation on QM9. We provide implementation details and experiment results of models with fewer steps in~\Cref{appendix:conditional_config} and~\Cref{appendix:conditional}.

\begin{wrapfigure}[12]{R}{0.35\textwidth}
    \centering
    \includegraphics[width=\linewidth]{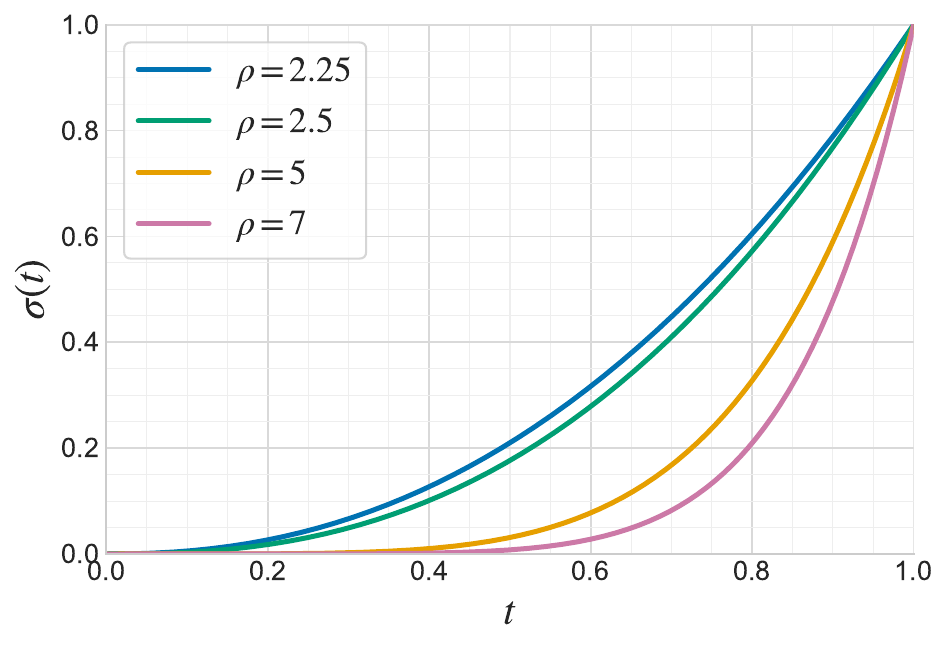}
    \caption{Noise schedules for different values of the exponent $\rho$ in the respaced timesteps in~\Cref{eq:reschedule}.}
    \label{fig:rho}
    \vspace{5pt}
\end{wrapfigure}

As presented in~\Cref{tab:conditional_results}, \method \textbf{achieves comparable or better results} than other methods in almost all metrics with \textbf{the fewest NFEs}. In $\varepsilon_{\mathrm{HOMO}}$ and $\varepsilon_{\mathrm{LUMO}}$, \method yields 330meV and 489meV MAE respectively compared to 340meV and 522meV produced by the 1000-step GeoLDM teacher. This demonstrates the applicability of \method to various scenarios.

\subsection{Ablation Studies}

\label{subsec:ablation}

\begin{table*}[t]
\centering
\caption{Ablation studies. \textbf{Left}: comparisons of using different $\rho$'s during sampling. \textbf{Middle}: ablation on the divergence regularizer. \textbf{Right}: ablation on the sampling method.}
\label{tab:ablation_summary}
\label{tab:rho}
\label{tab:divergence_ablation}
\label{tab:yesorno_ablation}
\vspace{4pt}
\begin{minipage}[b]{0.29\textwidth}
\centering
\textbf{Schedule exponent $\rho$}\\[2pt]
\scriptsize
\resizebox{0.8\linewidth}{!}{%
\begin{tabular}{ccc}
\toprule
$\rho$ & Mol Sta (\%) & V\&U (\%) \\
\midrule
2    & 69.93          & 81.27          \\
2.25 & \textbf{70.78} & \textbf{81.48} \\
2.75 & 68.46          & 81.35          \\
5    & 35.23          & 61.95          \\
\bottomrule
\end{tabular}
}
\end{minipage}
\hfill
\begin{minipage}[b]{0.34\textwidth}
\centering
\textbf{Divergence regularizer}\\[2pt]
\scriptsize
\resizebox{\linewidth}{!}{%
\begin{tabular}{lcc}
\toprule
Divergence Term & Mol\ Stab\ (\%) & V\&U (\%) \\
\midrule
DMD only                                   & 81.51          & 68.84 \\
Forward KL ($\lambda=0.1$)    & 58.09          & 59.10 \\
JS only                              & 83.34          & 72.36 \\
\textbf{DMD + JS} ($\lambda=0.1$)  & \textbf{83.89} & \textbf{72.70} \\
\bottomrule
\end{tabular}
}
\end{minipage}
\hfill
\begin{minipage}[b]{0.35\textwidth}
        \centering
\textbf{Sampling Method}\\[2pt]
        \label{tab:timestep}
        \scriptsize
        \resizebox{\linewidth}{!}{%
        \begin{tabular}{c c c c}
            \toprule[1.0pt]
            NFE & Respaced & DDIM & Consistency \\
            \midrule[0.8pt]
            4  & \textbf{12.43}{\tiny$\pm$0.30} & 0.01{\tiny$\pm$0.01}  & 0.00{\tiny$\pm$0.00}  \\
            8  & \textbf{70.78}{\tiny$\pm$0.39} & 5.06{\tiny$\pm$0.09}  & 6.71{\tiny$\pm$0.39}  \\
            16 & \textbf{94.61}{\tiny$\pm$0.18} & 41.84{\tiny$\pm$0.42} & 59.94{\tiny$\pm$0.49} \\
            \bottomrule[1.0pt]
        \end{tabular}
        }
\end{minipage}
\hfill
\vspace{-9pt}
\end{table*}

\paragraph{Effect of Respaced Schedule on Sampling Quality}

To isolate the contribution of the noise schedule, we sample with the GeoLDM teacher on QM9 without distillation under three          
conditions and report molecule stability: (1) uniform schedule with DDIM~\citep{song2020denoising}, as in GeoLDM~\citep{xu2023geometriclatentdiffusionmodels}; (2) uniform schedule with consistency sampling (Consistency); (3) \method's respaced schedule with consistency sampling (Respaced). As shown                   
in~\Cref{tab:timestep}, condition (3) \textbf{achieves 94.61\% molecule stability} and surpasses trained few-step models of       
MOLTD~\citep{zhang2025accelerating} at 16 NFEs \textbf{without any distillation training}. Consistency sampling alone yields a
small additional gain over DDIM.

\paragraph{Sensitivity to the Schedule Exponent $\rho$}

\label{subsec:rho}

To characterize the sensitivity of the sampling process to $\rho$ in~\Cref{eq:reschedule}, we vary $\rho$ and sample using the GeoLDM teacher on QM9 without distillation training; results are shown in~\Cref{tab:rho}. $\rho = 2.25$ yields the best stability and uniqueness. An excessively large $\rho$ concentrates sampling steps in the low-noise regime, losing the coarse structural deformation that
  early high-noise steps provide: sampling with $\rho = 5$ yields only 35.23\% molecule stability, compared to 70.78\% at $\rho =2.25$. Conversely, smaller $\rho$'s closely resemble the original DDIM schedule, limiting their ability to improve sample quality.

\paragraph{Effect of $f$-Divergence Choice on Stability and Diversity}

We compare the \method objective in~\Cref{eq:final_loss} with its variants: 1) DMD only ($\lambda = 0$); 2)                        
Forward-KL, substituting $f(r) := r\log r$ in~\Cref{eq:f-div}, which corresponds to the forward-KL objective $\text{KL}(p_{\text{real}}||p_{\text{fake}})$~\citep{xu2025onestepdiffusionmodelsfdivergence}; 3) Jensen-Shannon only ($\mathcal{L}_{\text{JS-only}} = \mathcal{L}_{\text{JS}}$). As shown in~\Cref{tab:divergence_ablation} and~\Cref{tab:DMD_only}, combining JS-divergence regularization with the DMD loss yields the best   
overall performance: JS-divergence \textbf{boosts Valid \& Unique by approximately 4\% at 4 NFEs} by increasing sampling diversity,  
  while DMD loss alone improves molecule stability but \textbf{reduces Valid \& Unique}, indicating collapsed probability support.
  Forward KL-divergence causes dramatic instability due to high-variance estimation~\citep{xu2025onestepdiffusionmodelsfdivergence}, incompatible with DMD's reverse-KL
  objective. More details can be found in~\Cref{appendix:f-div-ablation}.

\section{Conclusion}

We propose \method, a few-step generative model for 3D molecules that achieves up to $250\times$ acceleration over its 1000-step GeoLDM teacher while matching or exceeding teacher-level sample quality. Our approach adapts distribution matching distillation to the molecular domain through two key designs: an EDM-inspired respacing of generation timesteps that supplies the few-step student with a viable initialization, and a Jensen-Shannon divergence term that regularizes the reverse-KL DMD objective with a forward-KL component to mitigate mode-seeking behavior.  \method attains comparable generation quality in as few as 4 steps and state-of-the-art molecular stability and validity on QM9 and
GEOM-DRUG in 8 steps.
\bibliographystyle{plain}
\bibliography{references}

\appendix
\clearpage
\appendix

\counterwithin{figure}{section}
\counterwithin{table}{section}
\counterwithin{equation}{section}
\counterwithin{algorithm}{section}

\section*{Appendix}

\section{Additional Preliminaries}
\subsection{Molecule Diffusion Models}
\label[appendix]{appendix:molecule_diffusion}

We provide additional details on the molecule diffusion model summarized in~\Cref{section:preliminaries}. Following GeoLDM~\citep{xu2023geometriclatentdiffusionmodels}, a molecule with $N$ atoms is represented as $\mathcal{G}=\langle x,h\rangle$, where $x\in\mathbb{R}^{N\times 3}$ denotes 3D atom coordinates and $h\in\mathbb{R}^{N\times d}$ denotes node features. GeoLDM first maps the molecule into a continuous latent representation with an encoder $\mathcal{E}_\phi$:
\[
q_\phi(z_x,z_h\mid x,h)=\mathcal{N}(\mathcal{E}_\phi(x,h),\sigma_0^2 I).
\]
Here, $z_x$ is the coordinate latent and $z_h$ is the feature latent. The coordinate latent is constrained to the zero-center-of-mass subspace, i.e., $\sum_i z_{x,i}=0$, which removes global translation degrees of freedom. A decoder $\mathcal{D}_\varphi$ maps latent samples back to molecular coordinates and atom features.

The diffusion model operates entirely in this latent space. At each diffusion step, Gaussian noise is added to both latent coordinates and latent features according to a predefined schedule $(\alpha_t,\sigma_t)$:
\[
q(z_{x,t},z_{h,t}\mid z_{x,t-1},z_{h,t-1})
=
\mathcal{N}(z_{x,t};\sqrt{\alpha_t}z_{x,t-1},\sigma_t^2 I)
\cdot
\mathcal{N}(z_{h,t};\sqrt{\alpha_t}z_{h,t-1},\sigma_t^2 I).
\]
Equivalently, the noised latent at timestep $t$ can be written as
\[
z_t=\alpha_t z_0+\sigma_t\epsilon,\qquad \epsilon\sim\mathcal{N}(0,I),
\]
where $z_t=\langle z_{x,t},z_{h,t}\rangle$. During this process, the coordinate component is projected back to the zero-center-of-mass subspace so that the latent representation remains translation invariant.

The reverse denoising process is parameterized by a neural network that predicts the added noise. The original diffusion training objective can be written as
\[
\mathcal{L}_{DM}
=
\mathbb{E}_{(z_{x,0},z_{h,0})\sim q_\phi,\,\epsilon\sim\mathcal{N}(0,I),\,t}
\left[
\|\epsilon_x-\epsilon_{\theta,x}(z_{x,t},z_{h,t},t)\|^2
+
\|\epsilon_h-\epsilon_{\theta,h}(z_{x,t},z_{h,t},t)\|^2
\right].
\]
The denoiser therefore learns to recover both geometric and feature perturbations in latent space. In \method, the teacher model $\mu_{\text{real}}$, the few-step student generator $G_\theta$, and the fake score estimator $\mu_{\text{fake}}$ all inherit this latent-space parameterization.

For geometric consistency, GeoLDM uses equivariant networks such as EGNN~\citep{satorras2022enequivariantnormalizingflows} in the encoder, decoder, and denoising model. These networks preserve SO(3)-equivariance for coordinate channels and SO(3)-invariance for feature channels. Informally, rotating the coordinate input rotates the coordinate output in the same way, while the feature output remains unchanged. Together with the zero-center-of-mass constraint and an invariant Gaussian prior, this yields a latent molecular distribution that is invariant to global rigid transformations~\citep{xu2022geodiff}. This geometric structure is the basis for the invariance statement used in our distribution matching formulation.

At inference time, GeoLDM starts from Gaussian latent noise, applies iterative denoising in latent space, and decodes the final latent sample with $\mathcal{D}_\varphi$. \method preserves this inference interface but distills the denoising process into a small number of consistency-sampling steps, as described in~\Cref{subsec:dmd bkg,subsec:reschedule}.

\clearpage
\section{Formal Statements and Proofs}

\label[appendix]{appendix:A}

We provide the formal definition of invariance as follows:
\begin{definition}
    For a sample $z = \left< z_x, z_h \right> \in \mathbb{R}^{N \times (3+d)} \sim \mathcal{N}(0, I)$ in the data distribution and the model parameter $\theta$, a loss that is computed according to $\mathcal{L}(z;\theta)$ is \emph{invariant} to SO(3) \footnote{We only need to prove that the model is invariant to SO(3) only. This is due to the fact that we subtract the mean of all equivariant features with respect to the molecule $\mathcal{G}$ in each step of the EGNN backbone and therefore operate on a zero-center-of-mass space as defined in EDM~\citep{hoogeboom2022equivariantdiffusionmoleculegeneration}, which is invariant to  translation by definition.} group if and only if:
$$
\nabla_\theta\mathcal{L}(z;\theta) = \nabla_\theta\mathcal{L}(\mathbf{R}z;\theta)
$$
for any $\mathbf{R} \in SO(3)$.
\end{definition}

We use the notation $\mathbf{R}z$ to indicate group action of $\mathbf{R}$ on $z$, which is a matrix multiplication that gives $\mathbf{R}z = \left< z_x\mathbf{R}, z_h \right>$ where $\mathbf{R} \in SO(3)$.

\subsection{Proof of~\Cref{thm:invariance}}

\begin{proof}
$\forall \mathbf{R} \in SO(3), \mathbf{R}\mathbf{R}^T=I, \det(\mathbf{R})=1$, thus:
\begin{align*}
\nabla\mathcal{L}_\text{DMD}(\mathbf{R}z;t) &= - \int p(\epsilon) \big( s_{\text{real}}(\alpha_t G_{\theta}(\mathbf{R}z) + \sigma_t\epsilon, t) - s_{\text{fake}}(\alpha_t G_{\theta}(\mathbf{R}z) + \sigma_t\epsilon, t) \big) \frac{dG_\theta(\mathbf{R}z)}{d\theta} \, d\epsilon \\
\intertext{Given the equivariant property of EGNN \citep{satorras2022enequivariantnormalizingflows}, we denote $z' = \langle z_x', z_h' \rangle = G_\theta(z)$, then $G_\theta(\mathbf{R}z) = \langle z_x'\mathbf{R}, z_h' \rangle$. Therefore, we denote $\bar{\mathbf{R}} = \operatorname{diag}\{\mathbf{R}, I_d\}$, then $G_\theta(\mathbf{R}z) = G_\theta(z) \bar{\mathbf{R}}$, $\bar{\mathbf{R}} \bar{\mathbf{R}}^T = I$ and $\det(\bar{\mathbf{R}}) = 1$.}
\nabla\mathcal{L}_\text{DMD}(\mathbf{R}z;t) &= -\int p(\epsilon) \big( s_{\text{real}}(\alpha_t G_{\theta}(z) + \sigma_t\epsilon\bar{\mathbf{R}}^T, t) - s_{\text{fake}}(\alpha_t G_{\theta}(z) + \sigma_t\epsilon\bar{\mathbf{R}}^T, t) \big)\bar{\mathbf{R}}\bar{\mathbf{R}}^T\frac{dG_\theta(z)}{d\theta} \, d\epsilon\\
&= -\int p(\epsilon) \big( s_{\text{real}}(\alpha_t G_{\theta}(z) + \sigma_t\epsilon\bar{\mathbf{R}}^T, t) - s_{\text{fake}}(\alpha_t G_{\theta}(z) + \sigma_t\epsilon\bar{\mathbf{R}}^T, t) \big)\frac{dG_\theta(z)}{d\theta} \, d\epsilon\\
&= -\int p(\epsilon\bar{\mathbf{R}}^T) \big( s_{\text{real}}(\alpha_t G_{\theta}(z) + \sigma_t\epsilon\bar{\mathbf{R}}^T, t) - s_{\text{fake}} (\alpha_t G_{\theta}(z) + \sigma_t\epsilon\bar{\mathbf{R}}^T, t) \big)\frac{dG_\theta(z)}{d\theta} \, d\epsilon\\
&= - \int p(\epsilon) \big( s_{\text{real}}(\alpha_t G_{\theta}(z) + \sigma_t\epsilon, t) - s_{\text{fake}}(\alpha_t G_{\theta}(z) + \sigma_t\epsilon, t) \big) \frac{dG_\theta(z)}{d\theta} \, d\epsilon \\
&= \nabla\mathcal{L}_\text{DMD}(z;t)
\end{align*}
\end{proof}

It is worth noting that the $\epsilon$ added during the Markovian forward process is \textit{not} equivariant, we need to prove that in expectation sense this does not matter. Given that $det({\mathbf{R}}) = 1, \forall \mathbf{R} \in SO(3),$ 
\begin{align*}
\nabla\mathcal{L}_\text{DMD}(z;t) &=
        - \int p(\epsilon) \big( s_{\text{real}}(\alpha_t G_{\theta}(z) + \sigma_t\epsilon, t) - s_{\text{fake}}(\alpha_t G_{\theta}(z) + \sigma_t\epsilon, t) \big) \frac{dG_\theta(z)}{d\theta} \, d\epsilon \\
&=  - \int p(\epsilon') \big( s_{\text{real}}(\alpha_t G_{\theta}(z) + \sigma_t\epsilon' \bar{\mathbf{R}}^{-1}, t) - s_{\text{fake}}(\alpha_t G_{\theta}(z) + \sigma_t\epsilon' \bar{\mathbf{R}}^{-1}, t) \big) \frac{dG_\theta(z)}{d\theta} \, d\epsilon'
\end{align*}
where $\epsilon' = \epsilon \bar{\mathbf{R}}$.

\clearpage
\section{Experimental Settings}
\label[appendix]{appendix:B}
\subsection{Discriminator Design}
  \label[appendix]{appendix:gan_design}

  The discriminator $D_\psi$ takes as input three intermediate layers of $\mu_{\text{fake}}$: the second, fifth, and seventh layers for QM9, and the zeroth,
  first, and second layers for GEOM-DRUG. We denote the selected representations of a molecule $x \in \mathbb{R}^{N \times D}$ by \texttt{mu\_fake\_out\_0}, \texttt{mu\_fake\_out\_1}, and
  \texttt{mu\_fake\_out\_2}, respectively.

  For each extracted representation $\texttt{mu\_fake\_out\_i} \in \mathbb{R}^{N \times D}$, where $i \in \{0,1,2\}$,
  $N$ is the number of atoms, and $D$ is the hidden dimension, we introduce a learnable query token $\texttt{q\_i} \in \mathbb{R}^{D}$. This query
  token aggregates information from the corresponding layer through cross-attention~\citep{vaswani2017attention}:
  \[
  \texttt{output\_i}
  =
  \operatorname{CrossAttn}
  \left(
  \text{query}=\texttt{q\_i},
  \text{key}=\texttt{mu\_fake\_out\_i},
  \text{value}=\texttt{mu\_fake\_out\_i}
  \right).
  \]
  The resulting representation is then passed through a layer-specific MLP to obtain the discriminator logit:
  \[
  \texttt{logits\_i} = \operatorname{MLP}_i(\texttt{output\_i}).
  \]
  Each logit is converted into a probability score $\texttt{p\_i}$ using the softmax activation. The final discriminator score $D_\psi(x)$ is computed as the
  average of the three probability scores:
  \[
  \texttt{p}
  =
  \frac{1}{3}
  \sum_{i=0}^{2} \texttt{p\_i}.
  \]

To achieve faster convergence and increase training stability, we also adopt an R1 regularization loss~\citep{mescheder2018trainingmethodsgansactually}:
\begin{align*}
    R_1(\psi) = \mathbb{E}_{x\sim p_{\text{data}}} \big[ 
    ||\nabla D_\psi(x)||^2
    \big].
\end{align*}

\subsection{Implementation details for unconditional generation}

\label[appendix]{appendix:config}

\paragraph{QM9} We implement our distillation framework on top of GeoLDM~\citep{xu2023geometriclatentdiffusionmodels} within the PyTorch framework~\citep{Paszke2017AutomaticDI}, using the EGNN-based equivariant score network~\citep{satorras2022enequivariantnormalizingflows} as both the teacher and the student generator. Following the GeoLDM implementation, we operate in the learned continuous latent space with invariant feature dimensionality of 2, and we retain the teacher's architectural hyperparameters for the student generator $G_\theta$, the fake score network $\mu_{\text{fake}}$, and the discriminator backbone. The student is initialized from the pretrained teacher's weights, while $\mu_{\text{fake}}$ is initialized as a copy of the teacher and subsequently updated online during distillation.

We adopt the Adam optimizer~\citep{kingma2017adammethodstochasticoptimization} for all three networks. We observe that 4-step generation training typically suffers from under-fitting and requires a larger learning rate to achieve a better minimum. For the $5$-step and $8$-step regimes, we use $G_{lr} = 8 \times 10^{-7}$, $\mu_{\text{fake},lr} = 32 \times 10^{-7}$, and $\text{disc}_{lr} = 16 \times 10^{-5}$. For the $4$-step regime, we scale all three learning rates by a factor of $10$, yielding $G_{lr} = 8 \times 10^{-6}$, $\mu_{\text{fake},lr} = 32 \times 10^{-6}$, and $\text{disc}_{lr} = 16 \times 10^{-4}$. The relative ratio between the three learning rates ($1 : 4 : 200$) is preserved across both regimes, which we found important for maintaining stable adversarial dynamics between the generator, the fake score estimator, and the discriminator. An exponential moving average of the student weights is maintained with decay $0.9999$ and used for all evaluation.

We provide more details on the self-rollout procedure. We also note that using 1, 1, and 3 respectively for generation of 4 steps, 5 steps, and 8 steps as the lower bound for the number of generated samples passed to $\mu_{\text{fake}}$ is optimal. During each training iteration, the number of generation steps $K$ is sampled from one of three curricula: $K \in \{1, 2, 3, 4\}$, $K \in \{1, 2, 3, 4, 5\}$, or $K \in \{3, 4, 5, 6, 7, 8\}$. The first two curricula target few-step inference, while the third targets higher-fidelity multi-step generation. Sampling $K$ uniformly within a curriculum exposes the student to a range of rollout horizons and encourages consistent generation quality across the supported inference budgets. At each step of the rollout, we apply the distribution matching gradient following DMD2~\citep{yin2024improveddistributionmatchingdistillation}, with the fake score network updated via a standard denoising objective on the student's generated samples.

The total training loss combines the distribution matching objective with two auxiliary terms: an adversarial GAN loss and a Jensen-Shannon divergence loss computed between the teacher and student score distributions. Across all experiments, we fix the GAN loss coefficient for $\mu_{\text{fake}}$ to $\lambda_{\text{GAN}} = 0.2$ and the Jensen-Shannon loss coefficient to $\lambda_{\text{JS}} = 0.1$. We observed that both auxiliary losses improve sample quality and stability but must be kept subordinate to the primary distribution matching loss to avoid mode collapse. Training is performed on RTX 4090 GPUs. For QM9, we use a batch size of 32 and the student converges within 5 epochs (15,000 iterations). 

\paragraph{GEOM-DRUG} We implement our distillation framework on top of GeoLDM~\citep{xu2023geometriclatentdiffusionmodels} in PyTorch~\citep{Paszke2017AutomaticDI}. Following
  GeoLDM, both the teacher and student models use an EGNN-based equivariant score network~\citep{satorras2022enequivariantnormalizingflows}. For GEOM-
  DRUG, we perform distillation in the learned continuous latent space with latent feature dimensionality $2$. Unless otherwise stated, the student generator $G_\theta$, the
  fake score network $\mu_{\mathrm{fake}}$, and the discriminator backbone follow the same architectural hyperparameters as the pretrained teacher. In
  particular, we use hidden dimension $256$, four EGNN layers, normalization factors $[1,4,10]$, and a global normalization factor of $1$. An exponential moving average of the student weights is maintained with decay $0.9999$ and used for all evaluation.

  The student is initialized from the pretrained teacher checkpoint, while $\mu_{\mathrm{fake}}$ is initialized as a copy of the teacher and updated online
  during distillation. We use the polynomial noise schedule \texttt{polynomial\_2} with noise precision $10^{-5}$ and train with the $\ell_2$ diffusion loss.
  The minimum diffusion time is set to $T_{\min}=0.02$, with the same value used during pretraining. For the GEOM-DRUG experiment reported here, we target
  $5$-step generation and set the rollout step number to $K=5$.

  We optimize the student generator, fake score network, and discriminator using Adam~\citep{kingma2017adammethodstochasticoptimization}. The learning rates are
  set to
  \[
  G_{\mathrm{lr}} = 5 \times 10^{-6}, \qquad
  \mu_{\mathrm{fake}, \mathrm{lr}} = 4 \times 10^{-6}, \qquad
  D_{\mathrm{lr}} = 2 \times 10^{-4}.
  \]
  The R1 regularization weight is set to $0.001$, with perturbation
  scale $\sigma=0.01$.

  The total objective combines the distribution matching loss with auxiliary adversarial and divergence terms. For this GEOM-DRUG setting, the adversarial network coefficient, which is related to both $\mu_{\text{fake}}$ and $D_\psi$, is set to $0.47$. We additionally enable the Jensen-Shannon divergence objective
  with coefficient $0.03$. The consistency regularization coefficient and additional regularization coefficient are both set to $0$.

  Training is performed with distributed data parallelism on $8$ A100 GPUs. We use a per-device batch size of $6$, corresponding to an effective batch size of $48$ and the student converges within 20,000 iterations, with 3,000 iterations of discriminator warmup.

\subsection{Quality-stability trade-off}

\label{appendix:q&s tradeoff}

The experiments are conducted using the same parameters as used to train the models in~\Cref{appendix:config}. However, we simply elongate the training time. Due to the higher magnitude of the learning rate for the generation of 4 steps, longer training leads to overshooting, and we do not report them in ablation on training time.

\subsection{Conditional Generation}

\label[appendix]{appendix:conditional_config}

Following previous conventions \citep{xu2023geometriclatentdiffusionmodels}, we evaluate conditional generation results on 6 properties. The condition is added as an additional node feature in the molecule representation. For each molecule property, we train a GeoLDM teacher for 3000 epochs conditioning on it and then distill a \method student model accordingly. We judge the results by training a classifier separately that outputs the value of this property given a molecule. Since generally speaking conditional generation is more difficult, we present the results of our 32 step models.

We train \method on the QM9 dataset, conditioning on the $\alpha, \Delta \epsilon$, HOMO, LUMO, $\mu$, and Cv respectively following~\citep{xu2023geometriclatentdiffusionmodels} and~\citep{hoogeboom2022equivariantdiffusionmoleculegeneration}. The student distills a pre-trained conditional GeoLDM teacher (\texttt{exp\_cond\_alpha} for instance) into a 32-step generator using the f-divergence variant of DMD~\citep{xu2025onestepdiffusionmodelsfdivergence} with Jensen--Shannon divergence as the final objective as specified in~\Cref{sec:methods}.

\subsubsection{Architecture}
\label{app:impl:arch}

The student inherits the teacher's architecture and is initialized from the teacher weights. Both networks operate on continuous point-structured geometric latents $z = \langle z_x, z_h \rangle$ with $k = 1$ invariant channel per node, parameterized by an EGNN denoiser with 9 layers and hidden width 192. Feature normalization factors are $[1, 8, 1]$ for $(x, h_{\mathrm{cat}}, h_{\mathrm{int}})$, and atomic charges are excluded from the input. An exponential moving average of the student weights is maintained with decay $0.9999$ and used for all evaluation.

\begin{table}[h]
\centering
\caption{Architecture hyperparameters.}
\begin{tabular}{lc}
\toprule
Hyperparameter & Value \\
\midrule
Hidden features (\texttt{nf}) & 192 \\
Number of layers (\texttt{n\_layers}) & 9 \\
Latent features (\texttt{latent\_nf}) & 1 \\
Normalization factors & $[1, 8, 1]$ \\
EMA decay & 0.9999 \\
\bottomrule
\end{tabular}
\label{tab:impl:arch}
\end{table}

\subsubsection{Diffusion Settings}
\label{app:impl:diffusion}

The teacher uses a polynomial-2 noise schedule with precision $10^{-5}$ and an L2 $\epsilon$-prediction objective. $[T_{\min}, T]$ with $T_{\min} = 0.02$ to avoid the unstable low-noise tail is applied to the timestep sampler used for the fake-score network's denoising-score-matching update.

\begin{table}[h]
\centering
\caption{Diffusion settings.}
\begin{tabular}{lc}
\toprule
Parameter & Value \\
\midrule
Noise schedule & polynomial-2 \\
Noise precision & $10^{-5}$ \\
Diffusion loss & L2 \\
Distillation steps $K$ & 32 \\
Minimum timestep for score estimation $T_{\min}$ & 0.02 \\
\bottomrule
\end{tabular}
\label{tab:impl:diffusion}
\end{table}

The discriminator is regularized with an R1 gradient penalty~\citep{mescheder2018trainingmethodsgansactually} of weight $10^{-3}$ at noise scale $\sigma = 0.01$.

\begin{table}[h]
\centering
\caption{Loss coefficients and regularization.}
\begin{tabular}{lc}
\toprule
Parameter & Value \\
\midrule
f-divergence type & Jensen--Shannon (\texttt{use\_js}) \\
f-divergence coefficient & 0.1 \\
GAN coefficient ($\mu_{\text{fake}}$ and discriminator, $\lambda_f$) & 0.2 \\
R1 gradient penalty weight & $10^{-3}$ \\
R1 noise scale $\sigma$ & 0.01 \\
\bottomrule
\end{tabular}
\label{tab:impl:loss}
\end{table}

\begin{table}[H]
\caption{Learning rates.}
\centering
\begin{tabular}{lc}
\toprule
Component & Learning rate \\
\midrule
Generator $G_\theta$ & $8 \times 10^{-7}$ \\
Fake-score network $\mu_{\mathrm{fake}}$ & $3.2 \times 10^{-6}$ \\
Discriminator $D$ & $1.6 \times 10^{-4}$ \\
\bottomrule
\end{tabular}
\label{tab:impl:lr}
\end{table}

\begin{table}[H]
\centering
\caption{Training and evaluation configuration.}
\begin{tabular}{lc}
\toprule
Parameter & Value \\
\midrule
Dataset & QM9 (second half, 50K) \\
Include charges & No \\
Batch size & 32 \\
Stability samples & 10{,}000 \\
GPU & NVIDIA RTX 4090 \\
\bottomrule
\end{tabular}
\label{tab:impl:train}
\end{table}

\subsubsection{Optimization}
\label{app:impl:opt}

Following DMD2~\citep{yin2024improveddistributionmatchingdistillation} and f-distill~\citep{xu2025onestepdiffusionmodelsfdivergence}, the three networks are optimized at distinct learning rates with Adam: the generator $G_\theta$ at $8 \times 10^{-7}$, the fake-score network $\mu_{\mathrm{fake}}$ at $3.2 \times 10^{-6}$, and the discriminator at $1.6 \times 10^{-4}$. The auxiliary networks ($\mu_{\mathrm{fake}}$ and $D$) are updated $5$ times per generator update (\texttt{step\_ratio} $= 5$), to better track the moving student distribution before each f-distill gradient is computed. The lower bound of the self-rollout step number is set to 3 for 8, 12 and 32 NFEs while 1 for 5 NFEs.

\subsubsection{Training and Evaluation}
\label{app:impl:train}

Training is performed on the QM9 second-half split (50K molecules) with batch size $32$ on a single NVIDIA RTX 4090 GPU. Evaluation is performed by running three runs,  drawing $10{,}000$ conditional samples each, and we report the conditional MAE against a pretrained EGNN property predictor $\omega$ trained on the first 50K split, following the protocol of EDM~\citep{hoogeboom2022equivariantdiffusionmoleculegeneration} and GeoLDM~\citep{xu2023geometriclatentdiffusionmodels}.

\clearpage
\section{Results}
\label[appendix]{appendix:C}
\paragraph{Sampling protocol} Following previous conventions, all outcomes below on both QM9 and DRUG dataset are obtained by running the test for 3 times. We sample 10000 molecules for each test. We report the mean for each metric. Our models are trained until Valid \& Unique drops.

\subsection{Unconditional Molecule Generation Results}

\paragraph{Stability and Diversity Trade-Offs} We train the 8-NFE and 5-NFE model using the configurations in~\Cref{appendix:B}. During training, although molecule stability generated by the models keeps increasing, Valid \& Unique metrics keeps declining by a small margin as shown in~\Cref{fig:epoch}.

\begin{figure}[h]
    \centering
    \includegraphics[width=0.48\textwidth]{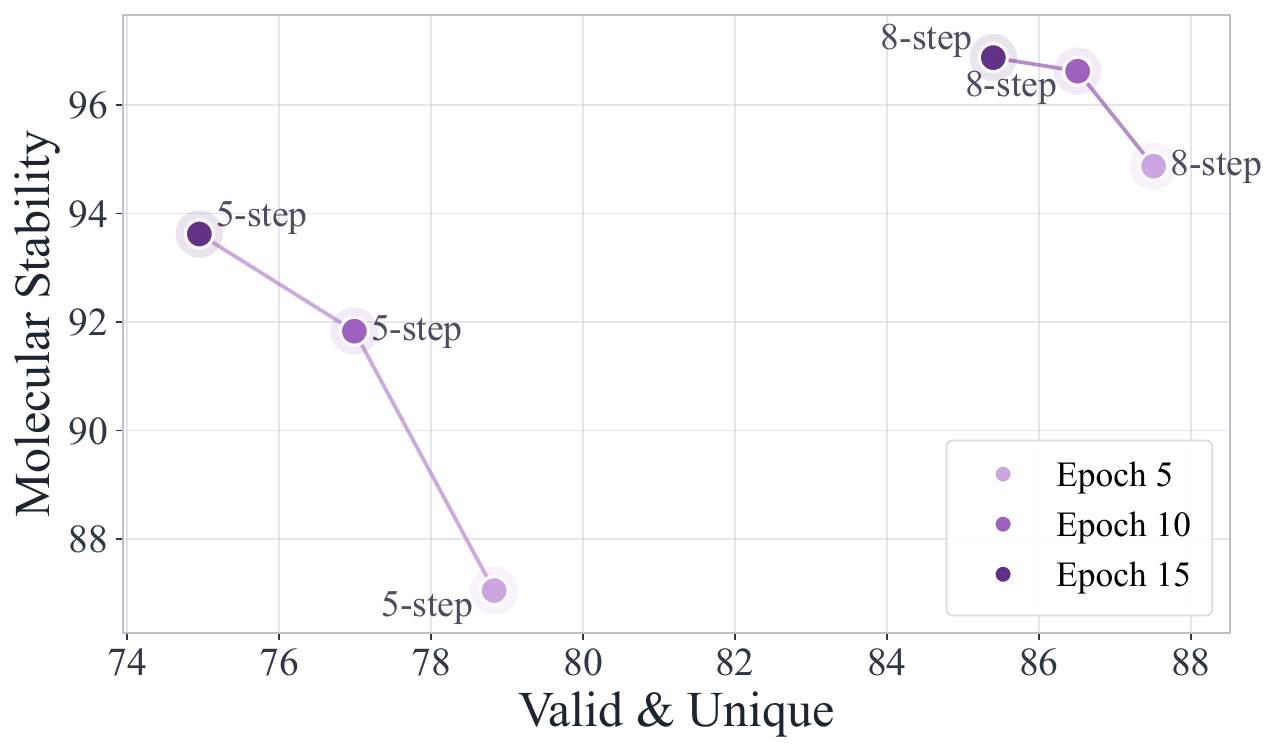}
    \caption{Training dynamics of the distilled student across epochs. Darker points correspond to later training epochs. While atom and molecule stability improve with training, uniqueness decreases monotonically
    }
    \label{fig:epoch}
\end{figure}

\subsection{Conditional Molecule Generation Results}

\label[appendix]{appendix:conditional}
Here, we present the test results of conditional generation trained with fewer NFEs. All hyperparameters are the same for the 12-step and 8-step models. We train each model until convergence: 5 epochs for the 12-NFE model, 6 epochs for the 8-NFE model, and 10 epochs for the 5-NFE model.

\begin{table}[h]  
\centering
\vspace{-10pt}
\caption{Conditional generation results are evaluated on various properties of the molecule. We report the MAE(↓) between generated molecules. The property of each generated molecule is predicted by a pretrained EGNN classifier. \textit{QM9} and \textit{Random} results can be viewed as lower and upper bounds of MAE on all properties. We \colorbox{blue!15}{highlight} metrics in which our student model surpasses the respaced teacher using the same NFEs.}
\begin{threeparttable}
    \begin{tabular}{l | c c c c c c c}
    \toprule
    Property & NFE & $\alpha$& $\Delta \varepsilon$ & $\varepsilon_{\mathrm{HOMO}}$ & $\varepsilon_{\mathrm{LUMO}}$ & $\mu$ & $C_v$\\
    Units & - & Bohr$^3$ & meV & meV & meV & D & $\frac{\text{cal}}{\text{mol K}}$  \\
    \midrule
    QM9 & -  & 0.10 & 64 & 39 & 36 & 0.043 & 0.040  \\
    \midrule
    Random & - & 9.01  &  1470 & 645 & 1457  & 1.616  & 6.857   \\
    $N_\text{atoms}$ & - & 3.86  & 866 & 426 & 813 & 1.053  &  1.971 \\
    EDM      & 1000    & 2.76  & 655 & 356 & 584 & 1.111 & 1.101 \\
    GeoLDM & 1000 & 2.37 & 587 & 340 & 522 & 1.108 & 1.025 \\
    GeoBFN & 100 &  2.34 & 577 & 328 & 516 & 0.998 & 0.949 \\
        \midrule[0.3pt]
    GeoLDM-respaced & 12 & 2.70 & 670 & 383 & 591 & 1.058 & 1.087 \\
    \textsc{\method} & 12 & 2.70 & \cellcolor{blue!15}646  & \cellcolor{blue!15}367 & \cellcolor{blue!15}578 & 1.082 & 1.117 \\
        \midrule[0.3pt]
    GeoLDM-respaced & 8 & 3.20 & 750 & 423 & 686 & 1.111 & 1.301 \\
    \textsc{\method} & 8 & \cellcolor{blue!15}2.98 & \cellcolor{blue!15}714 & \cellcolor{blue!15}399 & \cellcolor{blue!15}611 & 1.111 & \cellcolor{blue!15}1.230 \\
        \midrule[0.3pt]
    GeoLDM-respaced & \textbf{5} & 5.24 & 935 & 591 & 910 & 1.343 & 1.706 \\
    \textsc{\method} & \textbf{5} & \cellcolor{blue!15}3.88 & \cellcolor{blue!15}814 & \cellcolor{blue!15}538 & \cellcolor{blue!15}747 & 1.830 & \cellcolor{blue!15}1.428 \\
    
    \bottomrule
\end{tabular}
\end{threeparttable}
\vspace{-5pt}
\end{table}

\subsection{Ablation Study Results}

\paragraph{Effect of $f$-Divergence Choice on Stability and Diversity}

\label[appendix]{appendix:f-div-ablation}

We use the same configuration as elaborated in~\Cref{appendix:config} and compare our full objective (DMD + JS) against a Jensen-Shannon term only, reverse KL + forward KL, or a DMD only variant. The corresponding loss objectives are: $\mathcal{L}_{\text{DMD}}$, $\mathcal{L}_{\text{DMD}} + \lambda \cdot \mathcal{L}_{\text{forward}}$ and $\mathcal{L}_{\text{JS}}$ respectively, where $\lambda=0.1$ and $\mathcal{L}_{\text{forward}}$ is computed with $f :=r\log r$ in~\Cref{eq:f-div}. We train all models until V\&U metric drops (which is 5 epochs for all models). Higher is better for all metrics. Results are reported as means over 3 sampling runs of 10000 molecules each on QM9 dataset. The results are shown in~\Cref{tab:divergence_ablation}.

We also report the results trained without the Jensen-Shannon objective across 4, 5 and 8 steps in~\Cref{tab:DMD_only}. All models are distilled under the configurations as specified in~\Cref{appendix:config} from the same GeoLDM teacher with $K=4,5,8$ generation steps respectively and evaluated at epoch 5 where all training converge. Results trained with regularizers can be found in~\Cref{tab:main results}. Higher is better for all metrics. Results are reported as means over three sampling runs of 10000 molecules each. The results are shown in~\Cref{tab:yesorno_ablation}.

\begin{table}[h]
\centering
\caption{DMD-only variant}
\label{tab:DMD_only}
\begin{tabular}{lcc}
\toprule
NFE & Mol Stab (\%) & V\&U (\%) \\
\midrule
4 & 81.51 & 68.84 \\
5 & 88.60 & 78.19 \\
8 & 95.32 & 86.89 \\
\bottomrule
\end{tabular}
\end{table}

\clearpage
\section{Limitations and Broader Impact}
\label[appendix]{appendix:impact}
\subsection{Limitations}
\label[appendix]{app:limitations}

\paragraph{Uniqueness and Diversity Trade-Offs}
Despite the training efficiency and generation quality of \method, we observe a gradual
reduction in sample diversity as distillation training proceeds.
This is not unique to our method: it is a known characteristic of score-distillation
objectives, including the DMD family~\citep{yin2024improveddistributionmatchingdistillation,yin2024onestepdiffusiondistributionmatching}, which trade
stochastic uncertainty in sampling for stable, high-quality output.
In the molecular generation setting, diversity loss is particularly consequential:
a generative model used for virtual screening must cover a wide region of chemical
space to maximize the probability of discovering viable lead compounds.
We observe this effect primarily as reduced uniqueness scores over extended training,
while atom-level and molecule-level stability remain largely unaffected.

\paragraph{Generalization to other noise-sensitive molecular backbones.}
The core insight motivating \method---that certain equivariant architectures exhibit
structured sensitivity to noise in latent space that can be exploited by distribution
matching distillation and timestep respacing technique ---is not specific to the GeoLDM backbone~\citep{xu2023geometriclatentdiffusionmodels}.
We expect similar benefits to extend to other 3D molecular generative models that
operate over continuous latent representations of atomic coordinates, such as
protein--ligand co-design or conformation generation models.
Adapting \method to these settings may require re-examining the latent-space geometry
and the construction of the fake score network, particularly for architectures with
different equivariance groups (\emph{e.g.}, $\mathrm{SE}(3)$ vs.\ $\mathrm{E}(3)$)
or non-Gaussian priors.
We leave systematic benchmarking on broader molecular modalities to future work.

\subsection{Broader Impact}
\label{app:broader_impact}

This work accelerates the sampling of 3-D molecular generative models by applying
distribution matching distillation to equivariant latent diffusion models.
The primary intended application is computational drug discovery and materials design,
where generative models are used to propose novel molecular structures for subsequent
experimental validation.

\paragraph{Positive impacts.}
Reducing the inference cost of high-quality molecular generative models lowers the
computational barrier to large-scale virtual screening---a step that currently limits
the practical deployment of diffusion-based molecular design tools in academic and
resource-constrained industrial settings.
Faster generation enables higher-throughput exploration of chemical space within fixed
compute budgets, which could accelerate the early-stage identification of
therapeutically relevant molecules.
More broadly, \method is a general distillation framework applicable to any
equivariant latent diffusion model, and may therefore benefit downstream tasks such
as protein structure prediction, antibody design, and materials discovery.

\paragraph{Risks and limitations.}
As with all generative models for molecular design, there is a potential for misuse:
accelerated generation of structurally valid molecules could in principle be directed
toward the design of harmful compounds, including those with toxic, environmentally
persistent, or weaponisable properties.
We note, however, that \method operates as a distillation of an existing teacher model
and introduces no novel generative capacity beyond what is already present in the
teacher; the risk profile of \method is therefore no greater than that of the teacher
molecular diffusion model itself.
We recommend that downstream applications incorporate appropriate safety filters---for
example, standard toxicity predictors (\emph{e.g.}, hERG cardiotoxicity screening,
Ames mutagenicity tests) applied to generated structures prior to any experimental
follow-through.

\paragraph{Societal context.}
The acceleration of AI-driven drug discovery has complex distributional effects: it
may reduce time-to-drug for diseases with well-funded research programs while
potentially increasing the concentration of early-stage discovery power in
organisations with large compute resources.
We hope that the reduced inference cost introduced by \method partially counteracts
this trend by making high-quality molecular generation more accessible to the broader
research community.



\end{document}